\documentclass[DIV=calc, paper=letter, fontsize=11pt]{scrartcl}	 

\usepackage[]{units}
\usepackage{lipsum} 
\usepackage[english]{babel} 
\usepackage[protrusion=true,expansion=true]{microtype} 
\usepackage{amsmath,amssymb,amsfonts,amsthm,bm,bbm} 
\usepackage[svgnames, table]{xcolor} 
\usepackage[hang, small,labelfont=bf,up,textfont=it,up]{caption} 
\usepackage{booktabs} 
\usepackage{fix-cm}
\usepackage[]{units}

\usepackage{subfig}

\usepackage{tabularx}

\usepackage[colorlinks=true, linkcolor=blue, citecolor=blue, filecolor=blue,
runcolor=blue, urlcolor = blue]{hyperref} 

\usepackage[capitalize, noabbrev]{cleveref}

\usepackage{fullpage}
\usepackage{sectsty} 

\usepackage{fancyhdr} 
\usepackage{lastpage}

\usepackage{nicefrac}

\usepackage{tikz}
\usetikzlibrary{calc,patterns,decorations.pathmorphing,decorations.markings}
\usepackage{pgfplots}

\usepackage{multicol}
\usepackage{float}

\usepackage[utf8]{inputenc}
\usepackage[english]{babel}
\usepackage{tikz}
\usepackage{comment}

\usetikzlibrary{arrows}

\usepackage{mathtools} 

\usepackage{algorithm, algpseudocode}

\usepackage{enumerate}

\DeclareMathOperator*{\argmin}{arg\,min}

\usepackage[
backend=biber, 
style=numeric-comp,
sorting=none
]{biblatex}

\makeatletter
\newcommand{\multicolinterrupt}[1]{
\setcounter{tempcolnum}{\col@number}
\end{multicols}
#1
\begin{multicols}{\value{tempcolnum}}
}
\makeatother

\definecolor{issuePJA_color}{rgb}{1.0,0.0,0.0}

\definecolor{commentPJA_color}{rgb}{1.0,0.0,0.8}

\definecolor{issueQB_color}{rgb}{1.0,0.8,0.0}

\definecolor{commentQB_color}{rgb}{0.6,0.0,0.8}

\definecolor{rev_color}{rgb}{0.6,0.0,0.0}

\definecolor{atz_table1}{rgb}{0.85, 0.85, 0.85}
\definecolor{atz_table2}{rgb}{0.8, 0.8, 0.8}
\definecolor{atz_table3}{rgb}{0.75, 0.75, 0.75}

\newcommand{\mb}[1]{\mathbf{#1}}
\newcommand{\bsy}[1]{\boldsymbol{#1}}

\newcommand{\R}{\mathbb{R}}
\newcommand{\N}{\mathbb{N}}

\newcommand{\M}{\mathcal{M}}

\newcommand{\norm}[1]{\left\lVert #1 \right\rVert}
\newcommand{\abs}[1]{\left\lvert #1 \right\rvert}
\newcommand{\pn}[1]{\left( #1 \right)}

\newcommand{\set}[1]{\left\{ #1 \right\}}

\newtheorem{theorem}{Theorem}[section]

\usepackage{graphicx}
\DeclareGraphicsExtensions{.png}

\usepackage{lettrine} 
\newcommand{\initial}[1]{ 
\lettrine[lines=3,lhang=0.3,nindent=0em]{
\color{DarkGoldenrod}
{\textsf{#1}}}{}}

\usepackage{caption}

\DeclareOldFontCommand{\bf}{\normalfont\bfseries}{\mathbf}

\usepackage{graphicx}

\usepackage{titling} 

\newcommand{\HorRule}{\color{DarkGoldenrod}

\rule{\linewidth}{1pt}} 

\pretitle{\vspace{-80pt} \begin{flushleft} \HorRule \fontsize{13}{13}
\color{DarkRed} \selectfont} 

\title{Extending Neural Operators: Robust Handling of Functions Beyond the Training Set}

\posttitle{\par\end{flushleft}} 

\preauthor{\vspace{-15pt} \begin{flushleft}\fontsize{12}{12} 
\color{DarkRed}} 
\author{B. Quackenbush$^{1}$, P. J. Atzberger$^{1,2}$ } 
\postauthor{\small \color{Black} 
\\ $[1]$ Department of Mathematics,
University of California Santa Barbara (UCSB). \\
$[2]$ Department of Mathematics, Department of Mechanical Engineering, 
University of California \\ Santa Barbara (UCSB); atzberg@gmail.com;
\url{https://web.atzberger.org/}
\vskip 1.5em
\vspace{-0.6cm}
\HorRule
\end{flushleft} 
} 
\date{\today}

\pgfplotsset{compat = 1.16}

\DeclareGraphicsExtensions{.png}

\begin{document}
\date{}
\maketitle 

\thispagestyle{fancy} 

\vspace{-1.75cm}
\initial{W}\textbf{e develop a rigorous framework for extending neural
operators to handle out-of-distribution input functions. We leverage kernel
approximation techniques and provide theory for characterizing the input-output
function spaces in terms of Reproducing Kernel Hilbert Spaces (RKHSs).  We
provide theorems on the requirements for reliable extensions and their
predicted approximation accuracy. We also establish formal relationships
between specific kernel choices and their corresponding Sobolev Native Spaces.
This connection further allows the extended neural operators to reliably
capture not only function values but also their derivatives. Our methods are
empirically validated through the solution of elliptic partial differential
equations (PDEs) involving operators on manifolds having point-cloud
representations and handling geometric contributions.  We report results on key
factors impacting the accuracy and computational performance of the extension
approaches. 
}

\setlength{\parindent}{5ex}

\section*{Introduction}

Neural operators provide a class of machine learning methods for data-driven
learning of mappings between spaces of
functions~\cite{Bronstein2021,Izenman2012,Stuart2010,
Atzberger2022,Atzberger2020,Chen1995,Kovachki2021a}. Examples include learning
solution operators for Partial Differential Equations (PDEs) and Integral
Operators~\cite{Strauss2007,Pozrikidis1992,Audouze2009,
AtzbergerQuackenbushGNP2024,Bhattacharya2021}, estimators for inverse
problems~\cite{Stuart2010,Kaipio2006,Gelman1995}, estimating geometric
quantities~\cite{AtzbergerQuackenbushGNP2024,Atzberger2020,Quackenbush2025},
and data assimilation~\cite{Stuart2010,Asch2016}.  Neural operators can be used
to provide approaches for non-linear approximations, for learning
representations for analytically unknown operations through training, or
provide accelerations of frequent operations performed in iterative
algorithms~\cite{Chui2018,Izenman2012,Meilua2023,
Atzberger2020,Atzberger2022,Bronstein2021}.

Neural operators use deep neural networks for learning operators by leveraging
a lifting operation to featurize an input function and a collection of 
linear integral operations coupled with non-linear activations. 
This allows for learning mappings that generalize the discrete finite 
layers of multi-layer perceptrons and by integration are less sensitive 
to noise in the input functions and the resolution at which they are 
sampled~\cite{Kovachki2023a}. 

We perform analysis of neural operators to characterize in more detail
the natural function spaces that arise when making different choices 
for the training data. We provide ways to extend neural operator 
mappings beyond the functions in the training set by leveraging 
results in kernel approximation. For judicious choices of the 
training sets this results in input and output
functions spaces corresponding to Reproducing Kernel Hilbert Spaces
that can be viewed as embedded within Sobolev Spaces.  This allows 
for developing extensions that not only ensure convergence of
the functions but also of their derivatives.
We provide theory and theorems that further characterize these spaces and the
key factors impacting the accuracy of the extended neural operators.  

Some related work has been done on architectural design of 
neural operators and on the development of theory for their approximation properties
in \cite{Lu2022,Le2025,Kovachki2023a,Korolev2022,
Pordanesh2025,Mezidi2025,Berner2025,Mezidi2025}.  In \cite{Lu2022,Le2025,Kovachki2023a},
the approximation capabilities are characterized for classes of operators approximable 
by neural operator variants. A proof is given of 
dimension-independent approximation bounds for two-layer networks 
that provide mappings between Banach spaces in \cite{Korolev2022}.
In \cite{Le2025}, analysis is performed for the stability and 
generalization error and neural operators. A few different architectures for 
neural operators have also been proposed with structures motivated by 
results in functional analysis in \cite{Pordanesh2025,Mezidi2025,Berner2025}. 
This includes the development of multi-scale and spectral 
kernels~\cite{Kovachki2023a}, unified frameworks for lifting 
pointwise layers to operator spaces~\cite{Berner2025}, and the introduction of 
nonlinear skip connections derived from functional optimization problems to 
facilitate the training of deeper networks~\cite{Mezidi2025}.

In our work, we consider alternative approaches and generalize the action
of the operators through kernel-based reconstructions instead of relying 
primarily on data-driven interpolation. Most current methods 
depend strongly on the distribution of the training data and interpolation
to determine out-of-distribution performance. In contrast, we construct 
an operator extension method based on the native spaces of a collection of 
training kernels. In previous work, approximation bounds are developed for 
neural operators~\cite{Kovachki2023a,Lu2022} based on general properties.
In our work, we provide a more constructive framework that provides more
specialized error bounds based on the kernel choice and geometry of the domain. 
We also address the setting of operators on embedded manifolds, 
characterizing smoothness penalties that arise when restricting 
ambient kernels to the embedded geometric domain.  This provides more 
detailed bounds with more problem-specific attributes to control sources
of error or to inform architectural and kernel choices. 

To complement our theoretical results, we also perform empirical studies 
showing how the methods perform in practice. This includes investigating
learned Laplace-Beltrami operators and how they perform across manifold shapes 
of varying geometric complexity.  For further improvements in computational 
efficiency, we also introduce an architecture referred to as Separable Geometric Neural 
Operators (SB-GNPs).  We show how our SB-GNPs can be used to perform training which
ensures convergence not only of the functions but also their derivatives. This allows
for Sobolev training that minimizes the $\mathcal{H}^1$ error while maintaining efficiency
on large point clouds. We further evaluate our approaches based on approximate 
Green's Functions and compare the cases of using (i) Gaussian, (ii) Mat\'ern, and 
(iii) Wendland kernels. We show that while Gaussian kernels 
offer high regularity, they induce severe ill-conditioning and poor performance in 
capturing derivative information. In contrast, we show that 
Mat\'ern and Wendland kernels provide stable, accurate extensions that control the 
trade-off between operator approximation and kernel interpolation errors. These provide
for ways to obtain more robust methods for extending the neural operators and 
capturing derivative information for input functions that were not seen in
the training data. We further characterize the role of hyper-parameters of these
neural operator methods through a collection of benchmark studies.

We organize the paper as follows. We discuss background on neural operators,
kernel methods, and our extension approaches in
Section~\ref{sec:operator_extension}. We then discuss theory on our
operator extensions and the 
related Reproducing Kernel Hilbert Spaces (RKHSs) in Section
~\ref{sec:kernel_interpolation}. We provide proofs of our main theorems in
Section~\ref{sec:operator_extension_proof_sec}.  We discuss how our
approaches can be incorporated in practice into neural operator
architectures and ways the training performance can be made
more efficient in Section~\ref{sec:gnp}.  
We demonstrate the extension methods in approximating solution operators
for geometric PDEs and perform 
empirical studies to further characterize their accuracy and other properties 
in Section~\ref{sec:demonstration}.  Our results show 
a few practical ways neural operators can be extended 
to handle robustly input functions beyond the training data.

\vspace{1cm}

\section{Neural Operators}
\label{sec:operator_extension}
Neural operators are used to learn mappings between function spaces  
using as training data evaluations from 
pairs of functions $\set{(f_i(\cdot),
u_i(\cdot))}_{i=1}^N$. For a target
operator $u_i =
\mathcal{G}[f_i]$, a neural operator learns an approximate operator
$\mathcal{G}_\theta$, 
\cite{Kovachki2021a}.  
Neural operators are motivated by 
applications that include speeding up operations
during solving inverse problems~\cite{Kovachki2023a},
accelerating simulations~\cite{azizzadenesheli2024neural}, approximating
solution maps $\mathcal{S}_\theta$ 
for parameteric PDEs~\cite{Kovachki2023a,FNO_2020},
and geometric tasks involving solving PDEs on manifolds
and curvature-driven shape 
deformations~\cite{Quackenbush2025,AtzbergerQuackenbushGNP2024}.

We consider neural operators $w=\mathcal{G}_\theta[u]$ of the form 
\begin{equation} 
    \mathcal{G}_\theta = \mathcal{Q}
\circ \sigma_T\pn{W_T + \mathcal{K}_T} \circ \dots \circ \sigma_0\pn{W_0
+ \mathcal{K}_0} \circ \mathcal{P}.  
\end{equation}
This consists of $T$ layers with the following three
learnable components, (i) a lifting operator $\mathcal{P}[a]$ with $a
\in \R^{d_a}$ where for $a = (\{u(\tilde{x}_i)\}_{i=1}^N,\{x_i\}_{i=1}^N)$ 
a set of features are obtained from the function evaluations to yield $v_0 \in
\R^{d_v}$ with $d_v \geq d_a$, (ii) a composition of layers
involving integral operators $\mathcal{K}[v]$ and 
linear operators $\mathcal{W}[v](x)$ that are processed 
through a non-linear activation $\sigma_i[\cdot]$ to obtain
$v_{i+1} = \sigma_i \pn{Wv_i + \mathcal{K}[v_i] }$, and (iii) a
projection operator $\mathcal{Q}$ that gives a $\R^{d_u}$-valued output function
$w$. The activation of the last layer $\sigma_T$ is typically taken to be the identity.
The trainable sub-components also include within the operator layers the
kernel $\kappa$ and the local function operator $W$.  We collect all trainable
parameters into $\theta$ for the neural operator $\mathcal{G}_\theta$.

For $\mathcal{K}$ we consider integral operators of the form
\begin{equation}
\label{kernel_integral}
\mathcal{K}[v](x) = \int_D \kappa(x, y) v(y) \ d\mu(y).
\end{equation}
The $\mu$ is a measure on $D \subset \R^{d_{v}}$, $v:D \to \R^{d_{v}}$ is the
input function, and $\kappa$ is the operator 
kernel $\kappa(x, y) \in \R^{d_{v}} \times \R^{d_{v}}$.
In practice, the integral is often approximated on a truncated domain $B_r(x)$ 
using 
\begin{equation}
    \label{equ_kernel_approx}
    \tilde{\mathcal{K}}[v_t]( x_j) = 
    \frac{1}{N} \sum_{x_k \in B_r( x_j)} \kappa(x_j, x_k) v_t(x_k).
\end{equation}
This discretization can be interpreted as performing message passing on a
graph~\cite{gilmer2020message,gori2005new}.  
As further motivation for this work and to illustrate the main ideas 
for the extensions, we consider the problem of learning operators 
$u = \mathcal{S}_\theta[f]$ to approximate the solutions to 
a class of PDEs. 

\subsection{Approximation of Solution Operators for Partial Differential Equations.}
Consider the solution operator $u = \mathcal{S}[f]$ for PDEs of the form
\begin{equation}
\label{eq:linear_pde}
\mathcal{L}u(x) = f(x), \quad x \in \Omega.
\end{equation}
Here, $\Omega$ is a bounded domain and 
$\mathcal{L}$ is a bounded linear operator with 
inverse $\mathcal{S} = \mathcal{L}^{-1}$. We consider the case of boundary
conditions that are Dirichlet with $u(x) = 0$ for $x \in \partial \Omega$
or Neumann $\nabla u(x) \cdot n(x) = 0$ with in the latter solutions 
determined uniquely by also requiring $\int u(x) dx = 0$.  
In the case of kernels 
$k_\sigma: \Omega \times \Omega \to \R$ that are symmetric, positive-definite,
and translation invariant, we can express the kernels as $k_\sigma(x, y) =
\Phi(\sigma(x - y))$. We refer to $\Phi(r)$ as the radial function associated
with the kernel $k$. 
We take the shape factor $\sigma > 0$ which controls the
length-scale $\ell_k = \sigma^{-1}$ over which the kernel varies. The
length-scale $\ell_k$ is also referred to as the band-width of the kernel.
This controls the range of decay of the kernel and in 
discretizations the band-width of associated matrices. 

To train the neural operators $\mathcal{S}_\theta$ to approximate solution
operators $\mathcal{S}$, we use solutions of the following instances of the
problem
\begin{equation}
    \label{eq:linear_pde_kernel}
    \mathcal{L}_xg_\sigma(x, y) = k_\sigma(x, y), \quad x, y \in \Omega.
\end{equation}
The target solution operator can be expressed as $\mathcal{S}: k_\sigma \rightarrow g_\sigma$.
We use training data \\
$\set{(k_\sigma(\cdot, x_i), g_\sigma(\cdot,
x_i))}_{i=1}^N$, where $k_\sigma$ serves the role of the input function and 
$g_\sigma$ the target output solution function.

In the case $k_\sigma(x, y)$ has compact support
and integrates to one, we have in the limit of $\sigma \to \infty$ that 
the kernel behaves similar to a Dirac $\delta$-function with
$k_\sigma(x, y) \rightarrow \delta(x- y)$. In this case the $g_\sigma$ 
approximates the Green's Function $G(x, y)$ for elliptic PDEs of
the form of equation~\ref{eq:linear_pde}. Motivated by this, we call 
$\tilde g_\sigma (x,y) = \mathcal{S}_\theta[k_\sigma(x, y)]$ the 
\textit{Pseudo-Green's Functions} of $\mathcal{L}$ with respect to $k_\sigma$.

Since solutions to such PDEs can be constructed through super-position 
using the Green's Function, this 
motivates the relationship to functions 
$f \in \mathcal{H}_k(\Omega)$ in the Reproducing Kernel Hilbert Space (RKHS) 
of $k_\sigma$. More generally, we consider
functions $f$ in the associated Native Space for the 
RKHS \cite{Aronszajn1950,berlinet2011reproducing}.
For such functions, we can use kernel techniques to obtain 
approximations of the function $f$ of the form
\begin{equation}
    \label{eq:rkhs_approximant}
    \tilde f(x) = \sum_{i=1}^{\abs{X}} \alpha_{i} k_\sigma(x, x_i), \quad x_i \in X.
\end{equation}
The $X \subset \Omega$ is a finite discrete set of points that are used 
as centers for the kernel evaluations. 

For a function $u$ that is a solution to the PDE in equation~\ref{eq:linear_pde} 
having as the right-hand-side function $f$, we can use $\tilde{f}$ 
and the Pseudo-Green's Functions to 
obtain the approximate solution 
\begin{equation}
 \label{eq:pseudo_green_solution}
\tilde u(x) = \sum_{i=1}^{\lvert X \rvert} \alpha_i \tilde g_\sigma(x, x_i).
\end{equation}
This procedure illustrates a way to obtain an approximate solution mapping
$\tilde{\mathcal{S}}: \tilde{f} \rightarrow \tilde{u}$ for the operator
$\mathcal{S}$. 

As we discuss in more detail below, this allows us to handle a broad class of
input functions beyond the training data. The explicit use of kernel
approximations allows us more control over the learned operator's behaviors
instead of just relying on the empirical neural network training of
interpolation or extrapolation of responses.  As we also discuss below, we
further can establish theory for the accuracy of our operator extensions and
characterize other key factors that contribute to the errors and robustness of
the approximations obtained from our training protocols and extension methods.

\subsection{Theory for Extending Neural Operators}
\begin{figure}[t]
\centering
\includegraphics[width=0.99\linewidth]{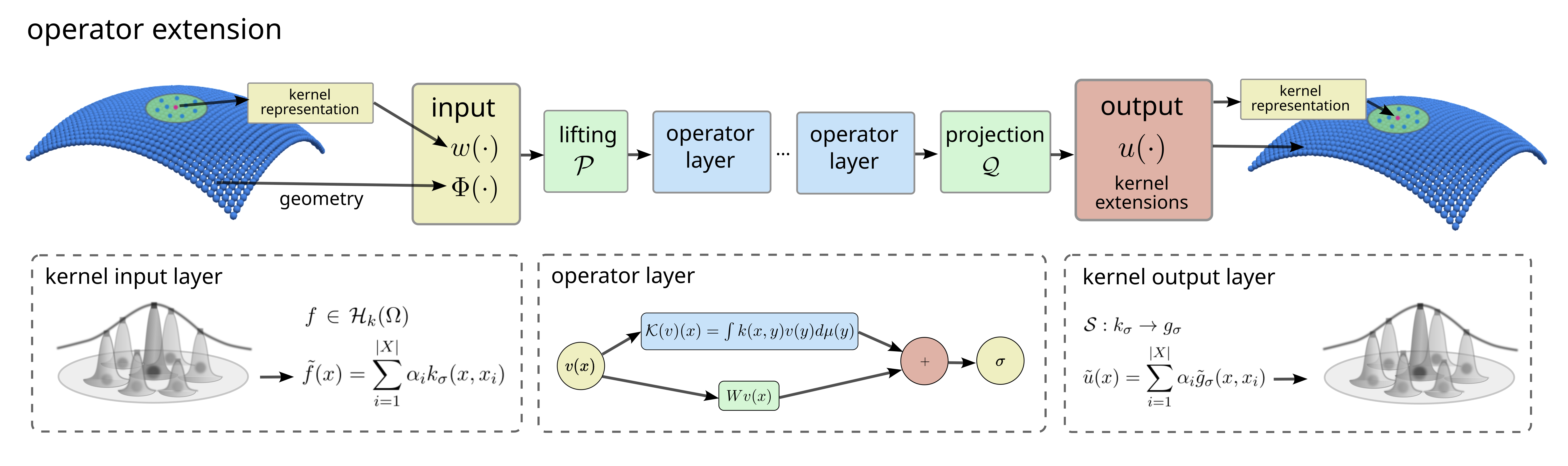}
\caption{\textbf{Operator Extension Methods.} We develop methods for extending
conventional neural operators and geometric neural operators to robustly handle
inputs beyond the functions in the training data. We leverage properties of
kernel approximations.  Neural operators process an input
function $w(\cdot)$ and geometric neural operators also process 
information from the geometry $\Phi(\cdot)$. The operators use
a combination of a featurizing lifting operator $\mathcal{P}$, kernel operator
layers, and a projection operator $\mathcal{Q}$ to obtain the output function
$u(\cdot)$, \textit{(top)}. We obtain representations for input functions in
terms of kernels $k_\sigma$, \textit{(lower-left)}. The kernel
constructions are shown for a subset of the points in the sum.  The operator layer
involves a combination of a kernel integration $\mathcal{K}[v]$ and a linear
operation ${W}[v]$ that is passed through a non-linear activation function
$\sigma(\cdot)$, \textit{(lower-center)}. The kernel representations are then
used to construct the output function $\tilde{u}(\cdot)$, \textit{(lower-right)}.  }
\label{fig:extension_methods}
\end{figure}
For extending neural operators more generally requires theory to characterize 
the roles played by the choice of kernel and other factors impacting
the accuracy and other properties of the operator. To understand how 
an extended neural operator will handle
inputs beyond the training data also requires 
determining the input and output function spaces 
generated by the extensions. 

For analysis of the neural operators, 
we consider a compact set $\Omega \subset \R^d$ a symmetric
positive-definite kernel $k: \Omega \times \Omega \to \R$ whose native space
$\mathcal{H}_k(\Omega)$ is equivalent in norm to the Sobolev Space 
$\mathcal{H}^s(\Omega)$, where
$\lceil s \rceil > d/2 + 2$, 
\cite{brezis2011functional,leoni2017first}. 
In this setting, for any $f \in \mathcal{H}^s(\Omega)$ and $\epsilon > 0$,
there exists a finite set $X \coloneqq \set{x_i}_{i=1}^{N} \subset
\Omega$ and a sequence 
$\bsy{\alpha}$ 
with $\alpha_i =
[\bsy{\alpha}]_i$ such that for any $\epsilon > 0$ 
    \begin{equation}
    \tilde f (x) = \sum_{i=1}^N \alpha_i k_\sigma(x, x_i)
    \end{equation}
    and satisfies
    \begin{equation}
        \label{eq:operator_extension_interpolation_pre_theorem}
        \norm{\tilde f - f}_{\mathcal{H}^s(\Omega)} < \epsilon.
    \end{equation} 
In this case, the extended neural operator 
$\mathcal{S}_{\theta,k}^e$ is given by
\begin{eqnarray}
\label{eq:neural_op_ext}
\tilde{u} = \mathcal{S}_{\theta,k}^e[f] := \sum_{i=1}^{|X|} 
\alpha_i \mathcal{S}_{\theta,k}[k_\sigma(\cdot,x_i)].
\end{eqnarray}
This uses the learned operator $\mathcal{S}_{\theta,k}$
which was trained only on input functions of the form
$k_{\sigma,j} = k_\sigma(\cdot,x_j)$. The steps
of the extension method are illustrated in 
Figure~\ref{fig:extension_methods}.

We establish two theorems on the behaviors and accuracy of the neural operators and our
extensions methods, with more details given in
Section~\ref{sec:operator_extension_proof_sec}.  The first theorem gives a
result for neural operator extensions for functions having domain in $\Omega \subset \R^d$.
The second theorem gives a result for extended geometric neural operators 
when the functions are defined on a manifold $\Omega = \mathcal{M}$.

We remark that our approaches can be applied most immediately for
linear solution operators $\mathcal{S}$. The results also indicate approaches
that could further be generalized to other non-linear classes of operators by
using kernel representations as part of the lifting/projection operators 
$\mathcal{P},\mathcal{Q}$ or using alternatives to 
equation~\ref{eq:neural_op_ext} using approaches from functional 
calculus~\cite{haase2018lectures,haase2006functional,reed2012methods}.
This also includes using our extension techniques at the single-layer level 
internally within existing architectures to extend handling of inputs encountered 
within the internal layer operations. 

In the case of neural operators extended for functions having 
domain in $\Omega \subset \R^d$, we have the following result.
\begin{theorem} \label{thm:op_extend}
Let $\Omega \subset \R^d$ be a compact set and 
$k(\cdot,\cdot)$ a symmetric positive-definite kernel with
a native space that is norm-equivalent to the Sobolev Space  
$\mathcal{H}^s(\Omega)$ for $\lceil s \rceil > d/2 + 2$. 
Suppose the operator $\mathcal{S}$ is bounded with 
$\norm{S(f)}_{\mathcal{H}^1(\Omega)} \leq C_1\norm{f}_{\mathcal{H}^1(\Omega)}$.
For $\delta > 0$ and $\epsilon > 0$,
suppose the approximating operator $\mathcal{S}_{k, \theta}$ satisfies the 
uniform bound
\begin{equation}
\label{equ_thm_S_approx}
\norm{\mathcal{S}_{k, \theta}[k(\cdot, x)] - \mathcal{S}[k(\cdot, x)]}_{\mathcal{H}^1(\Omega)} < \delta,
\quad x \in \Omega,
\end{equation}
and the kernel approximation function $\tilde{f}$ satisfies
\begin{equation}
\label{eq:operator_extension_interpolation}
\norm{\tilde f - f}_{\mathcal{H}^1(\Omega)} < \epsilon.
\end{equation} 
Then, the approximated solution using $\mathcal{S}_{k, \theta}$ and $\tilde f$ 
approximates the true solution to equation~\ref{eq:linear_pde} with the bound    
\begin{equation}
\label{eq:thm_ext_acc}
\norm{\mathcal{S}_{k, \theta}^e[\tilde f] - \mathcal{S}[f]}_{\mathcal{H}^1(\Omega)}
\leq C_1 \epsilon + C_2 \delta,
\end{equation}
    where $C_2 = \norm{\bsy{\alpha}}_{\ell_1}$.
\end{theorem}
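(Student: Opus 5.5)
The plan is a triangle-inequality split: insert the exact solution of the kernel-approximated input, $\mathcal{S}[\tilde f]$, between the extended operator output and the true solution. Write $\tilde f = \sum_{i=1}^{N}\alpha_i k(\cdot,x_i)$ with centers $X=\{x_i\}_{i=1}^N\subset\Omega$ and coefficients $\bsy{\alpha}$. The extended operator is defined by equation~\ref{eq:neural_op_ext}, so $\mathcal{S}_{k,\theta}^e[\tilde f] = \sum_i \alpha_i \mathcal{S}_{k,\theta}[k(\cdot,x_i)]$. Then
\begin{equation*}
\norm{\mathcal{S}_{k,\theta}^e[\tilde f]-\mathcal{S}[f]}_{\mathcal{H}^1(\Omega)}
\leq \norm{\mathcal{S}_{k,\theta}^e[\tilde f]-\mathcal{S}[\tilde f]}_{\mathcal{H}^1(\Omega)}
+ \norm{\mathcal{S}[\tilde f]-\mathcal{S}[f]}_{\mathcal{H}^1(\Omega)} .
\end{equation*}

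For the second term, use that $\mathcal{S}=\mathcal{L}^{-1}$ is linear and satisfies the assumed $\mathcal{H}^1$ bound. This gives $\norm{\mathcal{S}[\tilde f - f]}_{\mathcal{H}^1}\leq C_1\norm{\tilde f-f}_{\mathcal{H}^1} < C_1\epsilon$ by equation~\ref{eq:operator_extension_interpolation}. Here $\tilde f - f\in\mathcal{H}^1$ is guaranteed because the native space is norm-equivalent to $\mathcal{H}^s\subset\mathcal{H}^1$ when $s\geq 1$. This is where the Sobolev hypothesis on $k$ enters: each $k(\cdot,x_i)$ lies in $\mathcal{H}^1(\Omega)$, so both $\mathcal{S}[k(\cdot,x_i)]$ and the assumption in equation~\ref{equ_thm_S_approx} make sense.

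For the first term, expand $\mathcal{S}[\tilde f]=\sum_i\alpha_i\mathcal{S}[k(\cdot,x_i)]$, again by linearity of $\mathcal{S}$ on the finite sum. The difference becomes $\sum_i \alpha_i\big(\mathcal{S}_{k,\theta}[k(\cdot,x_i)]-\mathcal{S}[k(\cdot,x_i)]\big)$. Applying the triangle inequality and homogeneity of the norm, then the uniform bound in equation~\ref{equ_thm_S_approx} at each center $x=x_i\in\Omega$, bounds this by $\sum_i|\alpha_i|\,\delta = \norm{\bsy{\alpha}}_{\ell_1}\delta = C_2\delta$. Adding the two pieces gives equation~\ref{eq:thm_ext_acc}.

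The only delicate point is bookkeeping rather than analysis. The extension is applied to $\tilde f$, whose expansion $\bsy{\alpha}$ is fixed, and the statement's $C_2$ refers to that specific $\bsy{\alpha}$. I should note this, together with the requirement that the uniform bound hold at every center, which is why equation~\ref{equ_thm_S_approx} is assumed for all $x\in\Omega$. A further subtlety is that $\norm{\bsy{\alpha}}_{\ell_1}$ may grow as $\epsilon\to0$, since the interpolation matrices can be ill-conditioned. The theorem does not claim otherwise, so I will only remark on this rather than attempt to control it.
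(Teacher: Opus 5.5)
Your proposal is correct and follows the paper's proof essentially step for step. It uses the same triangle inequality through $\mathcal{S}[\tilde f]$, linearity plus the $\mathcal{H}^1$ bound on $\mathcal{S}$ for the $C_1\epsilon$ term, and the uniform training bound at each center $x_i$ to get $\norm{\bsy{\alpha}}_{\ell_1}\delta$; the only difference is that the paper first constructs $\tilde f = s_f$ by taking the fill distance small in the kernel interpolation estimate, whereas you (legitimately) take the $\epsilon$-accuracy of $\tilde f$ as the stated hypothesis.
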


We remark that this gives a way to estimate the anticipated accuracy of the
extended neural operator $\mathcal{S}^e$ given a level of sampling of the
domain $X$ and the training accuracy $\delta$.  The $C_1$ depends on the target
operator $S$ and $C_2$ depends on the $\ell_1$-norm of $\bsy{\alpha}$ of the
approximating functions $\tilde{f}$.  For a set of target functions
$\mathcal{F} \subset \mathcal{H}^s$, the bound can be used to provide the level
of training accuracy $\delta$ needed to ensure accuracy of the extension.  The
target functions would in practice be application specific where there is often
some prior knowledge of characteristic physical scales and other properties.
As with any approximation scheme, the more rich the underlying functions the
more resolution or analogous information would be required to ensure accurate
results. The bounds can be used as part of getting a handle on these
requirements for the neural operators used.   

We further develop our results for extending operators on manifolds
$\mathcal{M}$. We view these as $m$-dimensional embedded sub-manifolds
$\mathcal{M} \subset \R^d$ where $m < d$.  We use kernels $k_{\mathcal{M}}$
obtained by restricting a kernel $k$ from $\R^d$ to $\mathcal{M}$.  This gives
$k_{\mathcal{M}} : \mathcal{M} \times \mathcal{M} \rightarrow \R$ where we set
$k_{\mathcal{M}}(x_1,x_2) := k(x_1,x_2)$ with the restriction $x_1,x_2 \in
\mathcal{M}$. These restricted kernels do not necessarily inherit the
properties of $k$.  For example, they may no longer be symmetric, translation
invariant, or isotropic, see Figure~\ref{fig:restricted_kernel}.  This can impact
the approximation properties of the kernels.  

For operator extensions in the manifold setting $\Omega = \mathcal{M}$ using restricted kernels
$k_{\mathcal{M}}$, we have the following result. 
\begin{theorem}
\label{thm:op_extend_manifold}
Let $\mathcal{M} \subset \R^d$ be a $m$-dimensional sub-manifold, and let 
$k: \R^d \times \R^d \to \R$ be a symmetric positive definite kernel. 
Consider the kernel $k_{\mathcal{M}} := k |_{\mathcal{M}}$ obtained by
restricting $k$ to the manifold $\mathcal{M}$.
For $\delta > 0$ and $\epsilon > 0$,
suppose the approximating operator $\mathcal{S}_{k_{\mathcal{M}}, \theta}$ satisfies the 
uniform bound
\begin{equation}
\label{equ_thm_S_approx_manifold}
\norm{\mathcal{S}_{k_{\mathcal{M}}, \theta}[k_{\mathcal{M}}(\cdot, x)]
-
\mathcal{S}[k_\mathcal{M}](\cdot, x)}_{\mathcal{H}^1(\mathcal{M})} < \delta,
\quad x \in \mathcal{M},
\end{equation}
and the $k_\mathcal{M}$ kernel approximation of function $\tilde{f}$ satisfies
\begin{equation}
\label{eq:operator_extension_interpolation_manifold}
\norm{\tilde f - f}_{\mathcal{H}^1(\mathcal{M})} < \epsilon.
\end{equation} 
If the native space $\mathcal{H}_k(\R^d)$ is norm-equivalent to
$\mathcal{H}^s(\R^d)$ with $s - (d-m)/2 > m/2$ and $\lceil s - (d-m)/2 \rceil
\geq 1$, then  the following bound holds,
\begin{equation}
\label{eq:thm_ext_acc_M}
\norm{\mathcal{S}_{k_\mathcal{M}, \theta}^e[\tilde f] - \mathcal{S}[f]}_{\mathcal{H}^1(\mathcal{M})}
\leq C_1 \epsilon + C_2 \delta.
\end{equation}
\end{theorem}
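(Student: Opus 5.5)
The plan is to follow the same triangle-inequality decomposition as in Theorem~\ref{thm:op_extend}, and to use the Sobolev hypotheses on $k$ only to make the manifold setting legitimate. The point to establish is that the restricted kernel $k_{\mathcal{M}}$ has a native space of Sobolev type on $\mathcal{M}$ in which the approximants $\tilde f$ live.

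Write $\tilde f = \sum_{i=1}^{|X|} \alpha_i k_{\mathcal{M}}(\cdot,x_i)$ with $x_i\in\mathcal{M}$. By linearity of $\mathcal{S}$ and the definition \eqref{eq:neural_op_ext} of the extension, we split
\begin{equation*}
\mathcal{S}^e_{k_\mathcal{M},\theta}[\tilde f]-\mathcal{S}[f]
= \sum_{i}\alpha_i\Big(\mathcal{S}_{k_\mathcal{M},\theta}[k_\mathcal{M}(\cdot,x_i)]-\mathcal{S}[k_\mathcal{M}(\cdot,x_i)]\Big) + \mathcal{S}[\tilde f - f].
\end{equation*}
The first sum is bounded in $\mathcal{H}^1(\mathcal{M})$ by $\norm{\bsy\alpha}_{\ell_1}\delta$, using the uniform bound \eqref{equ_thm_S_approx_manifold} at each center $x_i$; this gives $C_2=\norm{\bsy\alpha}_{\ell_1}$. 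The second term is bounded by $C_1\epsilon$, using boundedness of $\mathcal{S}$ on $\mathcal{H}^1(\mathcal{M})$ together with \eqref{eq:operator_extension_interpolation_manifold}. Here I take the $\mathcal{H}^1(\mathcal{M})$ analogue of the hypothesis $\norm{\mathcal{S}f}_{\mathcal{H}^1}\le C_1\norm{f}_{\mathcal{H}^1}$ as implicit in the statement.

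The substantive step, and the main obstacle, is justifying that everything above is well defined in $\mathcal{H}^1(\mathcal{M})$. Concretely, this means $k_{\mathcal{M}}(\cdot,x)\in\mathcal{H}^1(\mathcal{M})$, and $\mathcal{H}^1(\mathcal{M})$-approximants of the assumed form exist for $f$. I would handle this with the trace theorem for Sobolev spaces on submanifolds. If $\mathcal{H}_k(\R^d)\simeq\mathcal{H}^s(\R^d)$, then restriction to an $m$-dimensional submanifold is a bounded surjection
\begin{equation*}
\mathcal{H}^s(\R^d)\to\mathcal{H}^{s-(d-m)/2}(\mathcal{M}),
\end{equation*}
provided $s-(d-m)/2>0$. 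By the standard restriction property of RKHSs, the native space of $k_\mathcal{M}$ is exactly the trace space $\{g|_\mathcal{M}: g\in\mathcal{H}_k(\R^d)\}$, carrying the quotient norm. It is therefore norm-equivalent to $\mathcal{H}^{\tau}(\mathcal{M})$ with $\tau=s-(d-m)/2$; this fact is the content of the Fuselier--Wright restriction results, which I would cite. The hypothesis $\tau>m/2$ gives the embedding into continuous functions, so point evaluations and kernel sections make sense. The hypothesis $\lceil\tau\rceil\ge1$ makes $\mathcal{H}^\tau(\mathcal{M})\hookrightarrow\mathcal{H}^1(\mathcal{M})$ at least in the integer-order sense used, so each $k_\mathcal{M}(\cdot,x)$ lies in $\mathcal{H}^1(\mathcal{M})$.

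The remaining care is in confirming two things. First, symmetric positive definiteness of $k_\mathcal{M}$ is inherited, since it is just the restriction of a positive definite form to a subset. Second, the span of $\{k_\mathcal{M}(\cdot,x)\}_{x\in\mathcal{M}}$ is dense in its native space, so an $\epsilon$-approximant in $\mathcal{H}^1(\mathcal{M})$ exists, as assumed in \eqref{eq:operator_extension_interpolation_manifold}. Once these hold, the two-term estimate above yields \eqref{eq:thm_ext_acc_M}.
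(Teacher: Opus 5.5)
Your proof is correct and is essentially the paper's: the same linearity-plus-triangle-inequality split into $\mathcal{S}[\tilde f - f]$ and $\sum_i \alpha_i(\mathcal{S}_{k_\mathcal{M},\theta}-\mathcal{S})[k_\mathcal{M}(\cdot,x_i)]$, with $C_1=\norm{\mathcal{S}}_{\mathcal{H}^1(\mathcal{M})}$ and $C_2=\norm{\bsy{\alpha}}_{\ell_1}$. The only peripheral difference is that the paper obtains $\tilde f = s_f$ from Theorem~\ref{theorem:sobolev_approximation_theorem_manifold} with $\mu=1$ and a small manifold fill distance, not from your density/trace argument; and in that side remark, $\lceil\tau\rceil\geq 1$ only means $\tau>0$, so $\mathcal{H}^\tau(\mathcal{M})\hookrightarrow\mathcal{H}^1(\mathcal{M})$ really comes from $\tau>m/2\geq 1$ when $m\geq 2$ and can fail for curves, which does not affect the inequality since its hypotheses already presuppose the needed $\mathcal{H}^1(\mathcal{M})$ membership.
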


These results show how we can extend neural operators in the manifold setting and ensure
accuracy in the $\mathcal{H}^1$ norm. The Theorem~\ref{thm:op_extend_manifold} 
also shows to achieve accuracy we can avoid in the manifold setting 
the cumbersome task of constructing special translation invariant or isotropic 
kernels on $\mathcal{M}$.  We can instead leverage properties of kernels $k$ 
defined on the ambient space $\R^d$ and then restrict them to obtain $k_{\mathcal{M}}$
for $\mathcal{M}$, see Figure~\ref{fig:restricted_kernel}.  These theorems
also show for the errors of the extended operators it is sufficient to control
the accuracy $\delta$ of the training for the learned operator 
and the accuracy $\epsilon$ in the kernel approximations. 
We discuss in more detail below these results 
and characterize further how the native spaces
$\mathcal{H}_k(\Omega)$ and $\mathcal{H}_k(\mathcal{M})$ 
depend on the choices for the kernels. 

\section{Native Spaces for Kernel Approximations.}
\label{sec:kernel_interpolation}

We consider on $\R^d$ kernels $k(\cdot,\cdot)$ that are real-valued, symmetric, 
positive-definite, and translation invariant. In this case, the kernels can be expressed as 
$k(x, y) = \Phi(\sigma\cdot(x - y)) \in L^1(\R^d) \cap C(\R^d)$ on $\R^d$. For brevity in
the notation, we will suppress in many contexts the shape parameter $\sigma$, and take it fixed 
throughout the discussions. When we are able to use the Fourier transform $\hat \Phi$ of $\Phi$,
we can characterize the native space $\mathcal{H}_k(\R^d)$ as the collection 
of functions~\cite{Wendland2004} 
\begin{equation}
    \label{eq:native_space_fourier}
    \mathcal{H}_k(\R^d) = \set{f \in L^2(\R^d) 
    \cap C(\R^d) : \frac{\hat f}{\sqrt{\hat \Phi}} \in L^2(\R^d)}.
\end{equation}
When $s > d/2$ and 
\begin{equation}
    \label{eq:fourier_sobolev_equivalence}
    \hat \Phi(\omega) \sim \pn{1 + \norm{\omega}_2^2}^{-s},
\end{equation}
we have that the native space $\mathcal{H}_k(\R^d)$ is norm-equivalent to the Sobolev Space 
$\mathcal{H}^s(\R^d)$~\cite{Santin2018}.  The set of kernels that we will consider here
can be found in Table~\ref{table:sobolev_kernels}.   This includes the widely-used Gaussian 
kernel $\Phi(r) = \exp(-r^2)$. While the Gaussian kernel does not have a native 
space $\mathcal{H}_k$ that is norm-equivalent to any $\mathcal{H}^s$, it's native space is contained
as $\mathcal{H}_k \subset \mathcal{H}^s$ for all positive $s$. This allows us
to achieve related error bounds for functions in the native space of Gaussian
kernels~\cite{Wendland2004}. 

For manifolds and other geometric structures $\mathcal{M}$, it can be shown that norm
equivalence of the native space to Sobolev space also holds provided 
$\mathcal{M} \subset \R^d$ has Lipschitz boundary. In this case, the
restriction of $k$ to $\mathcal{M}$ yields the native space 
$\mathcal{H}_k(\mathcal{M}) = \mathcal{H}^s(\mathcal{M})$, \cite{Wendland2004}. To 
simplify the notation, we will often denote the domain as $\Omega = \mathcal{M}$ 
when discussing the manifold case.

\begin{figure}[t]
\centering
\includegraphics[width=0.9\linewidth]{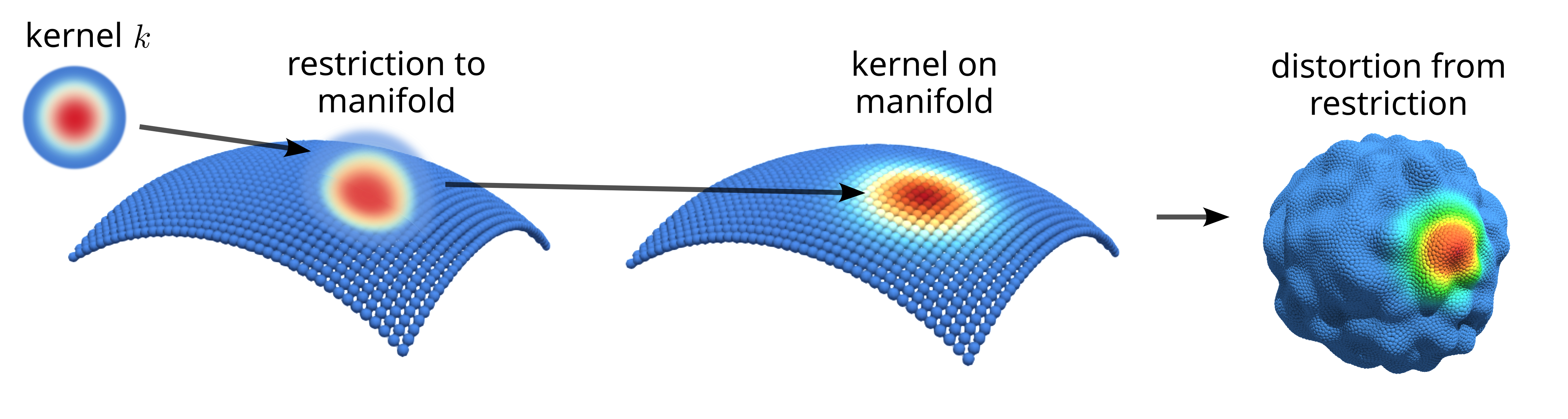}
\caption{\textbf{Kernels Restricted to Manifolds $k_{\mathcal{M}}$.} We develop theory for kernel
approximation based on restricting kernels from the bulk ambient space $k(x,y)$ to
a manifold surface $k_{\mathcal{M}}(x,y)$. Even when the bulk kernel $k$ has 
nice properties and symmetries, such as being a radial basis function, the 
restricted kernels $k_{\mathcal{M}}$ may not inherit these attributes. 
The $k_{\mathcal{M}}$ may no 
longer be radial symmetric 
$k_{\mathcal{M}}(x,y) \neq k_{\mathcal{M}}(|x - y|)$ or translation invariant 
$k_{\mathcal{M}}(x,y) \neq k_{\mathcal{M}}(x,y+\Delta)$.}
\label{fig:restricted_kernel}
\end{figure}

We consider $\mathcal{M}$ that are $m$-dimensional embedded smooth sub-manifolds
in $\R^d$ with $m < d$.  We use restricted kernels $\tilde k: \mathcal{M}
\times \mathcal{M} \to \R$ given by
\begin{equation}
\label{eq:restricted_kernel} \tilde k(x, y) = k_{\mathcal{M}}(x, y) = k \vert_{\mathcal{M}\times \mathcal{M}}(x,y),  \quad x, y \in \mathcal{M}.
\end{equation}
In this case, we have the native space $\mathcal{H}_{\tilde k}(\mathcal{M}) =
\mathcal{H}^{s - (d-k)/2}(\mathcal{M})$, \cite{Fuselier2012}. We can see 
there is a decrease in smoothness which corresponds to a $1/p$ penalty for 
each co-dimension $d-k$ of the manifold. The most common case we will consider is when $p=2$. 
This follows from extending trace operators to Sobolev spaces, restricting
functions from $\R^d$ to $\mathcal{M}$, see~\cite{Besov1978}.  
This is an interesting result since it is what allows for using the kernel $k$ from the
embedding space and restricting. This allows us to avoid the potentially cumbersome 
task of needing to design a kernel intrinsic to the manifold. 
An example of restricting a kernel $k$ on $\R^2$ to a $2$-dimensional manifold 
can be found in Figure~\ref{fig:restricted_kernel}. 

The results allow for using a wide range of accessible kernels $k$ to 
generate Sobolev native spaces. This allows our operator extension methods 
to be applied for a wide set of operators and functions. This also provides
results for the accuracy of these methods both for the Euclidean case 
and the Non-Euclidean manifold setting.

\begin{table}[ht!]
\begin{center}
\resizebox{0.86\linewidth}{!}{
\setlength{\tabcolsep}{5pt}
\renewcommand{\arraystretch}{1.6}
\begin{tabular}{|l|l|l|l|}
\hline
\rowcolor{black!20!white}
\textbf{Kernel}  & \textbf{Radial Function} $\bsy{\Phi(r)}$ & $\bsy{\mathcal{H}^s(\R^d)}$ &
$\bsy{\mathcal{H}^s(\mathcal{M})}$\\
\hline
\rowcolor{black!5!white}
Gaussian 
& $\exp\left(-r^2\right)$ 
& N/A 
& N/A \\
\rowcolor{black!12!white}
Mat\'ern Basic, $\nu = \frac{1}{2}$  
& $\exp\left(-r\right)$ 
&$s = \frac{d + 1}{2}$ 
& $s=\frac{3}{2}$ \\
\rowcolor{black!5!white}
Mat\'ern Linear, $\nu = \frac{3}{2}$ 
& $(1 +\sqrt{3}r)e^{-\sqrt{3}r}$ 
& $s = \frac{d + 3}{2}$
& $s=\frac{5}{2}$ \\
\rowcolor{black!12!white}
Mat\'ern Quadratic, $\nu = \frac{5}{2}$ 
& $(1 + \sqrt{5}r + \frac{5}{3}r^2)e^{-\sqrt{5}r}$ 
& $s = \frac{d + 5}{2}$ 
& $s=\frac{7}{2}$ \\
\rowcolor{black!5!white}
Wendland, $k=0$, $l= \lfloor d/2 \rfloor + 1$
& $(1 - r)_+^l$ 
& $s = \frac{d+1}{2}$ 
& $s=\frac{3}{2}$ \\
\rowcolor{black!12!white}
Wendland, $k=1$, $l= \lfloor d/2 \rfloor + 2$
& $(1 - r)_+^{l+1} ((l+1)r + 1)$ 
& $s = \frac{d+3}{2}$
& $s=\frac{5}{2}$ \\
\rowcolor{black!5!white}
Wendland, $k=2$, $l= \lfloor d/2 \rfloor + 3$
& $(1 - r)_+^{l+2}((l + 1)(l+3)r^2 + 3(l+2)r + 3)$ 
& $s = \frac{d+5}{2}$
& $s=\frac{7}{2}$ \\
\hline
\end{tabular}
}
\caption{\textbf{Kernels and RKHS Equivalent Native Spaces.} For the different 
kernels $k(x,y) = \Phi(|x - y|)$ and RKHS $\mathcal{H}_k$ generated by $k$, 
we give the equivalent native spaces. In the cases we consider,
these are Sobolev spaces $\mathcal{H}^s$, \cite{Santin2018}. For an input $\mb{x} \in
\mathbb{R}^d$, we take $r = \norm{\mb{x}}_2$. We also consider $\mb{x} \in
\mathcal{M}$ restricted to a manifold surface $\mathcal{M} \subset \R^3$. For
kernels obtained by restricting the free-space kernel $k(x,y)$, 
we also give the corresponding Sobolev space $\mathcal{H}^s(\mathcal{M})$.  }
\label{table:sobolev_kernels}
\end{center}
\end{table}

\begin{figure}[h]
\centering
\includegraphics[width=0.99\linewidth]{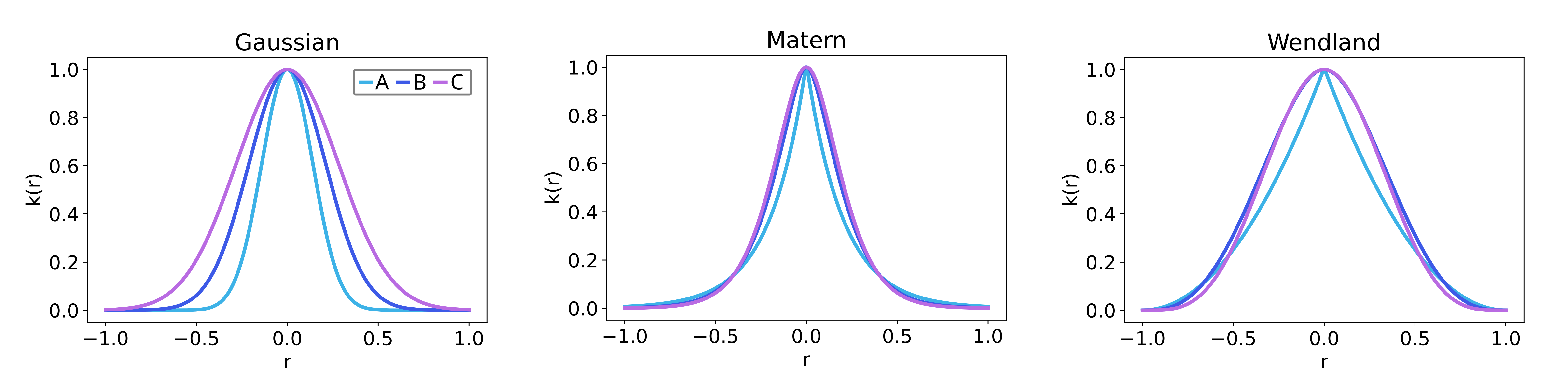}
\caption{\textbf{Kernels.} We show a few different kernels used in our 
approximations and comparison studies.  For the Gaussian kernels, 
we use $\Phi(\sigma \cdot r)$ with values for 
$\sigma = \ell_0^{-1}$ with $\ell_0 = 0.2, 0.3,0.4$ for 
the cases labeled $A$, $B$, $C$. For the Mat\'ern kernels, 
the $A$, $B$, $C$ cases correspond to the Basic, Linear,
and Quadratic kernels with the parameters shown in Table~\ref{table:sobolev_kernels}. For 
the Wendland kernels, we consider the cases with $k=0$, $k=1$, $k=2$ with 
parameters in Table~\ref{table:sobolev_kernels}. 
}
\label{fig:kernels}
\end{figure}

\subsection{Function Approximation using Kernel Methods}
\label{subsec:kernel_interpolation}
For a given kernel $k$ we seek approximations of a function $f$
in the RKHS $\mathcal{H}_k$ generated by $k$.  
We formulate the approximation as the solution to the optimization problem
\begin{equation}
    \label{eq:kernel_best_approximant_rep}
    s_f^{\lambda} = \argmin_{s \in \mathcal{H}_k(\Omega)} 
    \sum_{i=1}^N \pn{f(x_i) - s(x_i)}^2 + \lambda \|s\|_{\mathcal{H}_k(\Omega)}.
\end{equation}
This gives an approximation for a given 
set of $N$ points $x_i \in X_N \subset \Omega$ and 
given function $f \in \mathcal{H}_k(\Omega)$.  
By the Kernel Representer Theorem 
any objective function of the form 
$\ell(s(x_1),\ldots,s(x_N)) + \lambda \|s\|_{\mathcal{H}_k}$
always has a minimizing solution of the form 
$s_f^\lambda(x) = \sum_{i=1}^N \alpha_i^{\lambda} k(x, x_i)$,
\cite{Scholkopf2001}.  This is useful since it 
only involves a finite number of kernel 
evaluations at $x_i \in X_N$.
While in principle there can also be other 
minimizing functions in $\mathcal{H}_k$, we 
can always achieve the same minimum value
of the objective using a function of this form.
In our notation, we let 
$s_f = \lim_{\lambda \rightarrow 0} s_f^{\lambda}$. 

The quality of the function
approximation of $s_f$ to $f$ depends on the quality of the point
sampling $X_N$. It can be shown the kernel approximation 
in equation~\ref{eq:kernel_best_approximant_rep}
satisfies at each point $x \in \Omega$ the bound~\cite{schaback2023small,Santin2018} with 
\begin{equation}
\abs{f(x) - s_f(x)} \leq P_{X_N}(x) \norm{f}_{\mathcal{H}_k(\Omega)}, 
\quad x \in \Omega.
\end{equation}
The function $P_{X_N}$ is called the Power Function of $k$ with respect to
$X_N$ and can be found using the Newton basis $\set{l_i(x) \in V(X_N) :
l_i(x_j) = \delta_{ij}}$, 
\begin{equation}
\label{eq:power_function}
P_{X_N}(x) \coloneqq \norm{k(\cdot, x) - \sum_{j = 1}^N k(\cdot, x_j)l_j(x)}_{\mathcal{H}_k(\Omega)}, 
\quad x \in \Omega.
\end{equation}
The space of functions is defined by 
$V(X_N) := \{h(\cdot) \;|\; h(x) = \sum_i \alpha_i k(x,x_i), \;\; x_i \in X_N\}$.
In practice, while computing $P_{X_N}(x)$ is often more difficult than solving the 
approximation problem, it can provide insights into the key factors impacting 
approximations.  This can be used to obtain bounds 
on $\abs{f(x) - s_f(x)}$ based on the distribution of the points in $X_N$. 
A useful characteristic is the \textit{fill distance} 
$h_{X_N}$ of $X_N$ defined as 
\begin{equation}
\label{eq:fill_distance}
h_{X_N} \coloneqq \sup_{x \in \Omega} \min_{x_j \in X_N} \norm{x - x_j}_2^2. 
\end{equation}
This can be used in the error analysis of $f(x)$ and $s_f(x)$ and their
derivatives~\cite{Wendland2004}. The $P_{X_N}$ depends on the
quality of the sampling $X_N$. Related to these approaches, we state two
theorems below from the literature for the Euclidean $\R^d$ setting and the
Non-Euclidean embedded sub-manifold $\mathcal{M}$
setting~\cite{Wendland2004,Fuselier2012}.

\begin{theorem}
    \label{theorem:sobolev_approximation_theorem}
    Suppose that $\Omega \subset \R^d$ is bounded, has a Lipschitz boundary, and 
    satisfies an interior cone condition. Let $k$ be a positive definite 
    kernel satisfying equation~\ref{eq:fourier_sobolev_equivalence} and $\mu \in \N_0$
    satisfying $\lceil s \rceil - \mu - 1 > d/2$. Then for any set $X_N \subset \Omega$
    with $h_{X_N} \leq h_0$ where $h_0$ depends on $s, \Omega$, and any 
    $f \in \mathcal{H}_k(\Omega)$, the error between $f$ and its interpolant $s_f$ on
    $X_N$ can be bounded by 
    \begin{equation}
        \label{eq:sobolev_interpolant_error}
        \norm{f - s_f}_{H^\mu(\Omega)} \leq C h_{X_N}^{s - \mu} \norm{f}_{\mathcal{H}^s(\Omega)}.
    \end{equation}
\end{theorem}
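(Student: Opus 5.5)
The plan is to reduce the estimate to a \emph{sampling inequality} (the ``zeros lemma'') for Sobolev functions that vanish on the scattered set $X_N$, and then control the Sobolev norm of the error using the minimal-norm property of the interpolant. Throughout, I will use $h_{X_N}$ in its standard sense $\sup_{x\in\Omega}\min_j \norm{x-x_j}_2$; the squared version in equation~\ref{eq:fill_distance} only rescales the exponent.

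The first step is to identify the native space on $\Omega$ with $\mathcal{H}^s(\Omega)$. By equation~\ref{eq:fourier_sobolev_equivalence} and the characterization in equation~\ref{eq:native_space_fourier}, $\mathcal{H}_k(\R^d)$ is norm-equivalent to $\mathcal{H}^s(\R^d)$. The space $\mathcal{H}_k(\Omega)$ is the restriction space, carrying the quotient (minimal-extension) norm. Since $\Omega$ has Lipschitz boundary, Stein's extension operator $E:\mathcal{H}^s(\Omega)\to\mathcal{H}^s(\R^d)$ is bounded. Combining the two facts gives $\norm{g}_{\mathcal{H}_k(\Omega)}\sim\norm{g}_{\mathcal{H}^s(\Omega)}$ for all $g$ on $\Omega$.

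The second step is the standard optimality argument. In the limit $\lambda\to0$, $s_f$ is the interpolant of $f$ on $X_N$, which is the $\mathcal{H}_k(\Omega)$-orthogonal projection of $f$ onto $V(X_N)$. Hence $f-s_f\perp V(X_N)$ and $\norm{f-s_f}_{\mathcal{H}_k(\Omega)}\le\norm{f}_{\mathcal{H}_k(\Omega)}$. By the first step this yields
\[
\norm{f-s_f}_{\mathcal{H}^s(\Omega)}\le C\norm{f}_{\mathcal{H}^s(\Omega)}.
\]
In addition, $u:=f-s_f$ vanishes on all of $X_N$.

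The third step, which is the main obstacle, is the zeros lemma. It states that if $\Omega$ satisfies an interior cone condition, $u\in\mathcal{H}^s(\Omega)$ with $u|_{X_N}=0$, and $h_{X_N}\le h_0$, then
\[
\abs{u}_{H^\mu(\Omega)}\le C h_{X_N}^{s-\mu}\norm{u}_{\mathcal{H}^s(\Omega)}\quad\text{whenever } \lceil s\rceil-\mu-1>d/2.
\]
I would prove it by the local polynomial reproduction route of Wendland and of Narcowich--Ward--Wendland, in four parts.
\begin{itemize}
\item Cover $\Omega$ by star-shaped cone regions of diameter proportional to $h_{X_N}$ with bounded overlap.
\item On each region, approximate $u$ by an averaged Taylor polynomial $p$ of degree $\lceil s\rceil-1$. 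The Bramble--Hilbert lemma gives $\abs{u-p}_{H^\mu}\lesssim h^{s-\mu}\abs{u}_{H^s}$.
\item The condition $\lceil s\rceil-\mu-1>d/2$ supplies the Sobolev embedding into $C^{\mu}$, so $u-p$ is controlled pointwise. Because $u=0$ on the data points inside each region, $p$ is small on these points. Since the points form a norming set for polynomials of that degree once $h\le h_0$ (this is where the cone condition enters), $p$ itself is small in $H^\mu$ on the region.
\item Summing over the cover gives the local-to-global estimate.
\end{itemize}
Fractional $s$ is the delicate case. There I would follow the Narcowich--Ward--Wendland argument, using the Sobolev--Slobodeckij seminorm in the Bramble--Hilbert step.

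Finally, apply the zeros lemma to $u=f-s_f$ and bound $\norm{u}_{\mathcal{H}^s(\Omega)}$ by the second step. The lower-order parts of the $H^\mu$ norm are handled by the same lemma with smaller $\mu$, and these give higher powers of $h\le h_0$. The result is $\norm{f-s_f}_{H^\mu(\Omega)}\le Ch_{X_N}^{s-\mu}\norm{f}_{\mathcal{H}^s(\Omega)}$, with $h_0$ and $C$ depending only on $s$, $\mu$, $d$, the cone parameters of $\Omega$ and the kernel.
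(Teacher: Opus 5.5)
Your proposal is correct and is essentially the standard proof of this result. The paper gives no argument of its own and imports the theorem from the scattered-data literature (Wendland's monograph, and Narcowich--Ward--Wendland for fractional $s$), where it is proved exactly as you outline: norm equivalence of $\mathcal{H}_k(\Omega)$ with $\mathcal{H}^s(\Omega)$ via an extension operator, the minimal-norm (orthogonal projection) property of the interpolant, and the zeros lemma, with $\lceil s\rceil-\mu-1>d/2$ being their condition $k>\mu+d/2$ for $s=k+\sigma$, $\sigma\in(0,1]$, and your reading of $h_{X_N}$ as the unsquared fill distance is the right one.
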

\begin{theorem}
    \label{theorem:sobolev_approximation_theorem_manifold}
Let $\mathcal{M}$ be a $m$-dimensional sub-manifold of $\R^d$ and $k$ be a positive
definite kernel satisfying equation~\ref{eq:fourier_sobolev_equivalence}, and 
define $\tilde k$ by restricting $k$ to $\mathcal{M}$. Let $\tilde{s} = s - (d-m)/2$,
assume $\tilde{s} > m / 2$
and let $\mu \in \N$ with $0 \leq \mu \leq \lceil \tilde{s} \rceil - 1$. Then there 
is a constant $h_\mathcal{M}$ such that if a finite node set $X_N \subset \mathcal{M}$
satisfies $h_{X_N, \mathcal{M}} \leq h_{\mathcal{M}}$, then for all 
$f \in \mathcal{H}_{\tilde k}(\mathcal{M})$ we have
\begin{equation}
    \label{eq:manifold_sobolev_approximation}
    \norm{f - s_f}_{\mathcal{H}^\mu(\mathcal{M})} 
\leq C h_{X_N, \mathcal{M}}^{\tilde{s} - \mu} 
\norm{f}_{\mathcal{H}_{\tilde k}(\mathcal{M})}.
\end{equation}
\end{theorem}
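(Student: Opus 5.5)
The plan has three stages: identify the native space of $\tilde k$ with $\mathcal{H}^{\tilde s}(\mathcal{M})$, use the optimality of the interpolant in that space, and then apply a sampling inequality on $\mathcal{M}$ for functions with many zeros. Throughout I take $\mathcal{M}$ to be compact and smooth, as is implicit in the statement; this is what makes the fill-distance threshold $h_\mathcal{M}$ uniform.

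\emph{Step 1: the native space of $\tilde k$.} By the general restriction principle for reproducing kernels, $\mathcal{H}_{\tilde k}(\mathcal{M})$ consists of the restrictions $F|_{\mathcal{M}}$ with $F \in \mathcal{H}_k(\R^d)$. Its norm is the quotient norm
\begin{equation*}
\norm{f}_{\mathcal{H}_{\tilde k}(\mathcal{M})} = \inf\set{\norm{F}_{\mathcal{H}_k(\R^d)} : F|_{\mathcal{M}} = f}.
\end{equation*}
By equation~\ref{eq:fourier_sobolev_equivalence}, $\mathcal{H}_k(\R^d)$ is norm-equivalent to $\mathcal{H}^s(\R^d)$. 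The trace theorem for codimension-$(d-m)$ submanifolds (cf.~\cite{Besov1978}) says the restriction map $\mathcal{H}^s(\R^d) \to \mathcal{H}^{\tilde s}(\mathcal{M})$ is bounded and surjective, with a bounded right inverse given by an extension operator. Hence $\mathcal{H}_{\tilde k}(\mathcal{M})$ is norm-equivalent to $\mathcal{H}^{\tilde s}(\mathcal{M})$. The hypothesis $\tilde s > m/2$ guarantees continuous point evaluations on $\mathcal{M}$, so interpolation is well defined.

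\emph{Step 2: properties of the interpolant.} Since $\tilde k$ is positive definite on $\mathcal{M}$ (it is a restriction of a positive definite kernel), the interpolant $s_f$ exists and is unique. It is the $\mathcal{H}_{\tilde k}$-orthogonal projection of $f$ onto $V(X_N)$. Two consequences follow:
\begin{itemize}
\item $u := f - s_f$ vanishes on $X_N$;
\item $\norm{u}_{\mathcal{H}_{\tilde k}(\mathcal{M})} \le \norm{f}_{\mathcal{H}_{\tilde k}(\mathcal{M})}$, so by Step 1 $\norm{u}_{\mathcal{H}^{\tilde s}(\mathcal{M})} \le C\norm{f}_{\mathcal{H}_{\tilde k}(\mathcal{M})}$.
\end{itemize}
The theorem then reduces to a zeros lemma on $\mathcal{M}$: if $u \in \mathcal{H}^{\tilde s}(\mathcal{M})$ vanishes on $X_N$ and $h_{X_N,\mathcal{M}} \le h_\mathcal{M}$, then
\begin{equation*}
\norm{u}_{\mathcal{H}^\mu(\mathcal{M})} \le C h_{X_N,\mathcal{M}}^{\tilde s - \mu}\norm{u}_{\mathcal{H}^{\tilde s}(\mathcal{M})}, \qquad 0 \le \mu \le \lceil \tilde s\rceil - 1.
\end{equation*}

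\emph{Step 3: the zeros lemma on $\mathcal{M}$ (main obstacle).} I would localize to reduce to the Euclidean zeros lemma underlying Theorem~\ref{theorem:sobolev_approximation_theorem}.
\begin{itemize}
\item Cover $\mathcal{M}$ by finitely many charts $\psi_j : B_j \subset \R^m \to \mathcal{M}$, for instance normal coordinates or local graph parametrizations. Use uniform radius and uniformly bi-Lipschitz distortion, which compactness allows.
\item Take a subordinate smooth partition of unity $\set{\chi_j}$.
\item On each ball $B_j$, which satisfies an interior cone condition, the pulled-back point set $\psi_j^{-1}(X_N \cap \psi_j(B_j))$ has fill distance at most $C h_{X_N,\mathcal{M}}$ by the bi-Lipschitz property. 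This is where the threshold $h_\mathcal{M}$ enters: it must be small relative to the chart radius.
\item Apply the Euclidean sampling inequality to $u\circ\psi_j$ on $B_j$, obtaining $\norm{u\circ\psi_j}_{H^\mu(B_j)} \le C h^{\tilde s-\mu}\norm{u\circ\psi_j}_{H^{\tilde s}(B_j)}$.
\item Sum over $j$ using the equivalence $\norm{u}_{\mathcal{H}^\tau(\mathcal{M})}^2 \sim \sum_j \norm{(\chi_j u)\circ\psi_j}_{H^\tau(\R^m)}^2$, together with $\norm{u\circ\psi_j}_{H^\tau(B_j)} \lesssim \norm{u}_{\mathcal{H}^\tau(\mathcal{M})}$.
\end{itemize}
One point needs care: the zeros lemma must be applied to $u\circ\psi_j$ itself, not to $\chi_j u$. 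Multiplying by $\chi_j$ does not destroy the zeros, but the Euclidean lemma is stated on a domain rather than for compactly supported functions. So I would bound the $H^\mu$ norm on the overlapping balls first and assemble afterwards.

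The delicate issues in Step 3 are non-integer $\tilde s$ and uniformity of constants. For fractional orders I would use the Euclidean sampling inequality in Sobolev–Slobodeckij form, valid for $\tilde s > m/2$ with $\lceil\tilde s\rceil - \mu - 1 \ge 0$ in place of the integer hypothesis. I would also check that the chart-dependent constants and the cone parameters are uniform across the finite atlas. Combining Steps 2 and 3 gives equation~\ref{eq:manifold_sobolev_approximation}.
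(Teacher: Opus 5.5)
The paper does not prove this statement; it quotes it from Fuselier and Wright~\cite{Fuselier2012}, attributing the identification $\mathcal{H}_{\tilde k}(\mathcal{M}) \cong \mathcal{H}^{\tilde s}(\mathcal{M})$ to the trace theorem~\cite{Besov1978}, and your outline is correct and follows essentially that cited route. Your three stages are the same ingredients that reference uses: the restriction principle plus trace/extension for the native space, the orthogonal-projection property of $s_f$ to reduce to functions vanishing on $X_N$, and a chart-by-chart fractional-order Euclidean sampling inequality (the Arcang\'eli--L\'opez de Silanes--Torrens version, which is what permits $0 \le \mu \le \lceil \tilde s \rceil - 1$), with pulled-back fill distance $\le C h_{X_N,\mathcal{M}}$ once $h_{X_N,\mathcal{M}}$ is small relative to the chart radius.
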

The results of Theorem~\ref{theorem:sobolev_approximation_theorem_manifold}
show that by using our projected kernels $k_{\mathcal{M}}$ we can still achieve 
a comparable level of accuracy and
convergence as in Theorem~\ref{theorem:sobolev_approximation_theorem}. This
provides an approach that gives the same convergence rate as if we had designed an
isotropic and translation-invariant kernel for the manifold $\mathcal{M}$.
This allows us to construct radial kernels $k$ on the
ambient space $\R^d$ of dimension $d$ and then by restricting
them to obtain $k_{\mathcal{M}}$ still be assured of good 
approximation properties
for the manifold 
$\mathcal{M}$ of dimension $m$. The $\tilde{s} = s - (d-m)/2$
gives the reduction $d - m$ in smoothness of the Sobolev space 
with $p=2$ which arises whenever one reduces the spatial dimension
of the domain from $d$ to $m$. 

The fill distance $h_{X_N, \mathcal{M}}$ in
Theorem~\ref{theorem:sobolev_approximation_theorem_manifold} is with respect to
the intrinsic metric $d_\mathcal{M}$ of the manifold $\mathcal{M}$.  We remark
that having the results stated in terms of $h_{X_N, \mathcal{M}}$ in practice
is not an issue since for smooth compact manifolds it can be bounded by 
the ambient fill distance $h_{X_N,\R^d}$ and these become similar as the 
density of points increases~\cite{Wendland2004,Fuselier2012}.  
This can also be related to the fill
distance considered in the local coordinate charts $h_{X_N \cap U_j, U_j}$ for
$U_j \subset \R^m$~\cite{Fuselier2012}. These results 
can be used to provide bounds for our kernel approximations of functions 
in our extension methods. It also should be mentioned that in practice 
the choice of kernel also plays an important role in the 
level of numerical accuracy and conditioning of the approximations. 
We discuss in more detail below these and other practical aspects 
of kernel approximation.

\subsection{Function Approximation using Regularized Kernel Methods}
\label{subsec:regularized_kernel_interpolation}

The approximations obtained from equation~\ref{eq:kernel_best_approximant_rep}
in the limit $\lambda \rightarrow 0$  
can become ill-conditioned depending on the choice of kernel and the 
quality of the point sampling $X_N$. 
The conditioning also tends to become worse as the number of sample points
 $N$ increases.  This can arise from sample points that are nearly identical, 
overlap of the kernels for points on length-scales smaller than $\sigma^{-1}$,
and increasing density as $N$ increases.  
One approach to mitigate the ill-conditioning is to relax the $\lambda \rightarrow 0$ 
conditions for residual minimization and instead solve the regularized problem
with $\lambda > 0$,
\begin{equation}
    \label{eq:regularized_interpolation}
    s_f^{\lambda} = \argmin_{s \in \mathcal{H}_k(\Omega)} 
    \sum_{i=1}^N \pn{s(x_i) - f(x_i)}^2 + \lambda \norm{s}_{\mathcal{H}_k(\Omega)}^2.
\end{equation}
The $\lambda$ provides the strength of the regularization.
When $\lambda = 0$, we recover the residual minimization problem which 
corresponds to the interpolation case. 
In practice to find the coefficients of the solution function, 
we set to zero the gradient of the objective function 
in equation~\ref{eq:regularized_interpolation} to obtain the linear problem
\begin{equation}
    \label{eq:representer}
    (A + \lambda I) \bsy{\alpha} = \mb{b}, \quad [\mb{b}]_i = b_i = f(x_i),
\end{equation}
where $A$ is the Gram Matrix with entries $k(x_i,x_j)$ and $\bsy{\alpha}$ is
the vector of coefficients with $\alpha_i = [\bsy{\alpha}]_i$. When $\lambda > 0$ 
this form of the solution is unique. 
By adding the regularization term we can ensure that the condition 
number $\kappa(A + \lambda I)$ satisfies  
\begin{equation}
\kappa(A + \lambda I) = \frac{\lambda + \lambda_{\max}(A)}{\lambda + \lambda_{\min}(A)}
\leq 1 + \lambda_{\max}(A) / \lambda.
\end{equation}
This provides a useful bound that no longer depends on the smallest eigenvalue $\lambda_{\min}(A)$. 
It is known that while the largest eigenvalue has an upper bound of $N \Phi(0)$ 
that grows according to the number of points, the smallest 
eigenvalue also can decay rapidly $\rightarrow 0$,~\cite{Wendland2004}.
The utility of the regularization can be seen 
since the behavior of the smallest eigenvalue $\lambda_{\min}(A)$
depends on the separation radius $q_{X_N}$ defined by
\begin{equation}
    \label{eq:separation_radius}
    q_{X_N} \coloneqq \min_{i \neq j} \frac{1}{2}\norm{x_i - x_j}_2.
\end{equation} 
This $q_{X_N}$ plays a central role in determining how small the eigenvalues become,
with $q_{X_N}= 0$ yielding a zero eigenvalue. 
For example, in the case of a Gaussian
Kernel with $\Phi(r) = \exp(-\sigma r^2)$, we have the lower bound~\cite{Wendland2004}
\begin{equation}
    \label{eq:gaussian_minmal_eigenvalue}
    \lambda_{\min}(A) \geq C (2 \sigma)^{-d/2} \exp\pn{-C'/(q_{X_N}^2 \sigma)} q_{X_N}^{-d}.
\end{equation}
While this is a lower bound, for Gaussian kernels this same behavior often occurs 
for $\lambda_{\min}$ for typical point samplings encountered in applications. 
In practice, the smallest eigenvalue often decays rapidly toward $0$ 
as the separation distance $q_{X_N}$ becomes small.  For a few numerical studies,
we show condition numbers in Appendix~\ref{sec:appendix_a}. Similar to 
equation~\ref{eq:sobolev_interpolant_error}, we see a bound in the 
regularized case also can be obtained of the form~\cite{Wendland2005}
\begin{equation}
    \label{eq:interpolant_error_regularized}
    \norm{D^a s_f^\lambda - D^a f}_{L^2(\Omega)} \leq 
    C \pn{h_{X_N}^{\ell - \abs{a}} + h_{X_N}^{-\abs{a}}\sqrt{\lambda}}\norm{f}_{\mathcal{H}_k(\Omega)}.
\end{equation}
The $D^a f$ indicates the derivative of the function $f$ with partial derivatives in the 
$i^{th}$ direction of order $a_i = [a]_i$.
Here, the constant $C > 0$ depends on $\Omega, k$ and the smoothness $\ell$ of the kernel. 
For solving the regularized problem this also suggests balancing the penalty with the 
fill-distance $h$ by taking $\lambda = \lambda(h) \leq h_{X_N}^{2\ell}$.  
In this case, we obtain the bound 
\begin{equation}
\label{eq:interpolant_error_regularized2}
\norm{s_f^{\lambda(h)} - f}_{\mathcal{H}^{\abs{a}}(\Omega)} \leq \tilde{C}
h_{X_N}^{\ell - \abs{a}}\norm{f}_{\mathcal{H}_k(\Omega)}.
\end{equation}
We remark that this bound holds for the Sobolev-norm 
$\norm{\cdot}_{\mathcal{H}^{\abs{a}}(\Omega)}$ 
and these 
results provide ways for us to control the accuracy of
the kernel approximation both 
of functions $f$ and their derivatives $D^a f$ in the Native Spaces $\mathcal{H}_k$ 
in terms of the fill-distance $h_{X_N}$ of the point samplings $X_N$ and 
choice of kernel $k(\cdot,\cdot)$.  As we discuss in more detail 
and show in our empirical studies, the choice of kernel and parameters 
can have a significant impact on the performance of the approximations 
and extension methods.

\section{Proofs for the Kernel Extension Theorems~\ref{thm:op_extend} 
and~\ref{thm:op_extend_manifold}.}
\label{sec:operator_extension_proof_sec} 
We now use the results of the previous sections to provide proofs of 
our results in Theorem~\ref{thm:op_extend} and 
Theorem~\ref{thm:op_extend_manifold}.  These results leverage the 
properties of kernel approximation of the input functions.  We also
utilize the relationship between the Sobolev Native Spaces generated by
the unrestricted Euclidean kernels $k(\cdot,\cdot)$ 
and Non-Euclidean manifold kernels $k_{\mathcal{M}}(\cdot,\cdot)$.
We also characterize the behaviors of the associated Sobolev Native 
Spaces these kernels generate in the manifold setting. 

\subsection{Proof of Theorem~\ref{thm:op_extend}.}
\label{sec:operator_extension_proof}
We now establish the bounds given by Theorem~\ref{thm:op_extend}. 
\begin{proof}
Fix $\epsilon > 0, f \in \mathcal{H}^s(\Omega)$, and $\delta > 0$. As
stated in the theorem, we assume training can be
performed to obtain $\mathcal{S}_{k, \theta}$ 
acting on kernel input functions $k$ having an accuracy $\delta$ 
for a target operator $\mathcal{S}$ 
satisfying
\begin{equation} \label{equ_thm_S_approx_pf_2_1}
\norm{\mathcal{S}[k(\cdot, x)] - \mathcal{S}_{k, \theta}[k(\cdot,
x)]}_{\mathcal{H}^1(\Omega)} < \delta, \quad x \in \Omega.  
\end{equation}
Now using Theorem~\ref{theorem:sobolev_approximation_theorem}, we can choose a
set of points $X_N = \set{x_i}_{i=1}^N$ that have a fill distance $h_{X_N}$.
We choose the fill distance sufficiently small so that $h_{X_N} \leq h_0$ and 
\begin{equation}
\label{eq:fill_distance_small} h_{X_N}^{s-1} < \frac{\epsilon}{C
\norm{f}_{\mathcal{H}^s(\Omega)}}.
\end{equation}
We use the kernel approximation $s_f$ for $f$ that minimizes
equation~\ref{eq:kernel_best_approximant_rep}. This can be represented as $s_f(x) =
\sum_{i=1}^N \alpha_i
k(x, x_i)$ and satisfies $\norm{f - s_f}_{\mathcal{H}^1(\Omega)} <
\epsilon$ by equation~\ref{eq:interpolant_error_regularized2} and 
equation~\ref{eq:fill_distance_small}.  By setting 
$\tilde f = s_f$ and using the linearity and boundedness of $\mathcal{S}$, we have    
\begin{eqnarray}
\label{equ_pf_2_1_S_ineq_01}
\norm{\mathcal{S}[f] - \mathcal{S}_{k, \theta}^e[s_f]}_{\mathcal{H}^1(\Omega)}
&\leq& \norm{\mathcal{S}[f] - \mathcal{S}[s_f]}_{\mathcal{H}^1(\Omega)}
+ \norm{\mathcal{S}[s_f] - \mathcal{S}_{k, \theta}^e[s_f]}_{\mathcal{H}^1(\Omega)}\\
\label{equ_pf_2_1_S_ineq_02}
&\leq& \norm{\mathcal{S}}_{\mathcal{H}^1(\Omega)} \norm{f - s_f}_{\mathcal{H}^1(\Omega)}
+ \norm{\sum_{i=1}^N \alpha_i(\mathcal{S} - \mathcal{S}_{k, \theta})[k(\cdot,
x_i)]}_{\mathcal{H}^1(\Omega)}\\
\label{equ_pf_2_1_S_ineq_03}
&\leq&  \norm{\mathcal{S}}_{\mathcal{H}^1(\Omega)} \norm{f - s_f}_{\mathcal{H}^1(\Omega)}
+ \sum_{i=1}^N \abs{\alpha_i}
\norm{(\mathcal{S} - \mathcal{S}_{k, \theta})[k(\cdot, x_i)]}_{\mathcal{H}^1(\Omega)} \\
\label{equ_pf_2_1_S_ineq_04}
& \leq & C_1 \epsilon + C_2 \delta.
\end{eqnarray}
The constants are given by $C_1 = \norm{\mathcal{S}}_{\mathcal{H}^1(\Omega)}$ and 
$C_2 = \norm{\alpha}_{\ell_1} = \sum_{i=1}^N |\alpha_i|$.
We obtain equation~\ref{equ_pf_2_1_S_ineq_01} 
from the triangle inequality, and 
equation~\ref{equ_pf_2_1_S_ineq_02} by using 
boundedness of $\mathcal{S}$ and the specific form of $\tilde{f}$.
We then obtain equation~\ref{equ_pf_2_1_S_ineq_03} by properties of
the norm. The final bound in equation~\ref{equ_pf_2_1_S_ineq_04} follows
from the approximation accuracy of $\tilde{f}$ in
equation~\ref{eq:operator_extension_interpolation} and the 
uniform accuracy of $\mathcal{S}_{k, \theta}$ 
from equation~\ref{equ_thm_S_approx}. This provides the bound on the 
accuracy of the operator extension $\mathcal{S}_{k, \theta}^e$ given in 
equation~\ref{eq:thm_ext_acc}.
\end{proof}

\noindent

\subsection{Proof of Theorem~\ref{thm:op_extend_manifold}.}

We next discuss how to prove Theorem~\ref{thm:op_extend_manifold}.  This
requires we obtain estimates when restricting approximations to the
sub-manifold $\mathcal{M}$. While there are similarities to the Euclidean case,
the restrictions result in kernels that are no longer radial symmetric $k(x,y)
\neq k(|x - y|)$ or satisfy the translation invariance properties $k(\cdot,y)
\neq k(\cdot,y+ \Delta)$ for all $x,y$.  The approximations also now only
involve sample points from the sub-manifold $x_i \in \mathcal{M}$.  As part of handling
these issues, we leverage
Theorem~\ref{theorem:sobolev_approximation_theorem_manifold}.  We use $s_f$ to
approximate functions $f$ on $\mathcal{M}$ by minimizing
equation~\ref{eq:kernel_best_approximant_rep}. From
Theorem~\ref{theorem:sobolev_approximation_theorem_manifold}, we have the bound 
\begin{equation}
\label{eq:kernel_approximation_manifold}
\norm{f - s_f}_{H^\mu(\M)} \leq 
C h_{X, \M}^{s - \mu - (d - m)/2}\norm{f}_{\mathcal{H}^{s - (d-m)/2}(\M)}.
\end{equation}
We use these results in establishing Theorem~\ref{thm:op_extend_manifold}.
\begin{proof}
From equation~\ref{eq:manifold_sobolev_approximation}, 
we obtain $\|\tilde{f} - f\|_{\mathcal{H}^1(\mathcal{M})} < \epsilon$ by considering the case  when $\mu = 1$ 
with $\tilde{f} = s_f$ when the fill-distance $h_{X_N, \mathcal{M}}$ for points 
$X_N \subset \mathcal{M}$ satisfies 
\begin{equation}
    \label{eq:fill_distance_small_manifold}
h_{X_N, \mathcal{M}}^{s - (d-m)/2 - 1} < \frac{\epsilon}{C \norm{f}_{\mathcal{H}_{\tilde k}(\mathcal{M})}}.
\end{equation}
We remark that a significant difference with the previous result 
in equation~\ref{eq:fill_distance_small} is that the exponent is augmented now to 
$s - (d-m)/2$.  The $d - m$ is the 
co-dimension of the sub-manifold $\mathcal{M}$ when embedded in
$\R^d$.   This arises from
the distortions that can occur in
the kernel approximations when they are restricted 
to the lower dimensional sub-manifold, as proven in~\cite{Fuselier2012}.  

From these considerations and using the linearity and boundedness of $\mathcal{S}$, 
we obtain
\begin{eqnarray}
\label{equ_pf_2_2_S_ineq_01}
\norm{\mathcal{S}[f] - \mathcal{S}_{k, \theta}^e[s_f]}_{\mathcal{H}^1(\mathcal{M})}
&\leq& \norm{\mathcal{S}[f] - \mathcal{S}[s_f]}_{\mathcal{H}^1(\mathcal{M})}
+ \norm{\mathcal{S}[s_f] - \mathcal{S}_{k, \theta}^e[s_f]}_{\mathcal{H}^1(\mathcal{M})}\\
\label{equ_pf_2_2_S_ineq_02}
&\leq& \norm{\mathcal{S}}_{\mathcal{H}^1(\mathcal{M})} \norm{f - s_f}_{\mathcal{H}^1(\mathcal{M})}
+ \norm{\sum_{i=1}^N \alpha_i(\mathcal{S} - \mathcal{S}_{k, \theta})[k(\cdot, x_i)]}_{\mathcal{H}^1(\mathcal{M})}\\
\label{equ_pf_2_2_S_ineq_03}
&\leq&  \norm{\mathcal{S}}_{\mathcal{H}^1(\mathcal{M})} \norm{f - s_f}_{\mathcal{H}^1(\mathcal{M})}
+ \sum_{i=1}^N \abs{\alpha_i}
\norm{(\mathcal{S} - \mathcal{S}_{k, \theta})[k(\cdot, x_i)]}_{\mathcal{H}^1(\mathcal{M})} \\
\label{equ_pf_2_2_S_ineq_04}
& \leq & C_1 \epsilon + C_2 \delta.
\end{eqnarray}
The equation~\ref{equ_pf_2_2_S_ineq_01} is obtained from the triangle
inequality.  We obtain the equation~\ref{equ_pf_2_2_S_ineq_02} by using
boundness of the operator $\mathcal{S}$ and the specific form of $\tilde{f} = s_f$.  
This yields
equation~\ref{equ_pf_2_2_S_ineq_03} by properties of the $\mathcal{H}^1$ norm. We next use the
approximation accuracy of $\tilde{f}$ on the sub-manifold given by
equation~\ref{eq:operator_extension_interpolation_manifold} and the uniform
accuracy of $\mathcal{S}_{k, \theta}$ from
equation~\ref{equ_thm_S_approx_manifold}.  Together, this yields the final
equation~\ref{equ_pf_2_2_S_ineq_04}. 
The constants are given by $C_1 = \norm{\mathcal{S}}_{\mathcal{H}^1(\mathcal{M})}$ and 
$C_2 = \norm{\alpha}_{\ell_1} = \sum_{i=1}^N |\alpha_i|$.
This provides the bound on the accuracy
of the operator extension given in equation~\ref{eq:thm_ext_acc_M}.

\end{proof}

This shows that the errors can be controlled for how well the restricted kernels 
$k_{\mathcal{M}}$ approximate the functions $f$ on the 
$m$-dimensional sub-manifold $\mathcal{M}$. 
The operator extensions can achieve 
accuracy provided we can get the errors sufficiently small
in training for the operator responses to $k_{\mathcal{M}}(\cdot,x_i)$
for $x_i \in X_N \subset \mathcal{M}$. The main difference with the 
Euclidean case is the increased need for smoothness in the unrestricted 
kernel, which is reduced under restriction. We also need
sufficiently small manifold fill-distance $h_{X_N,\mathcal{M}}$.
The theory shows that for sufficiently accurate training on kernel
responses we can obtain accurate operator extensions for more
general functions in the kernel's native space both in the Euclidean and 
Non-Euclidean settings. We next discuss in more detail 
practical ways to obtain effective training methods.

\section{Training Methods for the Neural Operators and Sobolev Loss}
\label{sec:gnp}
We discuss how the neural operators can be trained using loss functions
based on Sobolev norms. This ensures the training methods result 
in neural operators that capture both the target functions and
their derivatives. This allows for further regularizing the smoothness
of the learned operators and helps ensure that physically-relevant information
is retained during training.  To train our neural operators more
efficiently, we also introduce approximations for the kernel 
integral operators and ways to leverage separable factorizations.  

\subsection{Approximating Kernel Integral Operators using Separable Factors.}
A major computational expense both during training and evaluation of neural
operators is computing the kernel integral operators $\mathcal{K}[v]$.  For
these approximations, edge-based convolutions are widely used to approximate the
integrals in equation~\ref{kernel_integral}.  To improve performance, we 
develop more efficient alternatives using node-based convolutions by 
factoring kernels into separable forms.   

Consider how 
a neural operator transforms vector-valued functions $v_j(x)$ at the $j$-th layer of 
an $L$-layer network. The neural operator uses a pointwise linear 
function represented by a learned matrix $W_j$ and a learned kernel $k_{\theta,j}$
processed by a pointwise non-linear activation function $\sigma(\cdot)$. This
is performed using the mapping 
\begin{equation}
    \label{eq:operator_layer}
    v_j(x) \mapsto \sigma \left(W_jv_j(x) + \int_{\mathcal{D}} \kappa_{\theta,j}(x, y) v_j(y) \ dy\right).
\end{equation}
The kernel integral is computed on a domain $\mathcal{D}$. In our neural operators,
the domain is a ball of radius $r$ giving $\mathcal{D} = B_r(x)$. A common method
to approximate the integral is to use the edge-based message passing approach
\begin{equation}
    \label{eq:integral_operation}
    \int_{\mathcal{D}} \kappa_\theta(x, y) v(y) \ dy \approx 
    \frac{1}{\abs{\mathcal{N}(x)}} \sum_{y \in \mathcal{N}(x)} \kappa_\theta(x, y)v(y).
\end{equation}
The $\mathcal{N}(x)$ is the neighborhood of $x$ in the graph constructed with edges
between all points at distances less than $r$ from $x$. 
As the number of points grow, computing equation~\ref{eq:integral_operation} at every point
can become prohibitively expensive both in compute memory and compute time. We
introduced in our previous work a few ways to mitigate this by restricting the form 
and output of the kernels $\kappa_\theta$~\cite{AtzbergerQuackenbushGNP2024, Quackenbush2025}.
Here, we introduce further approaches for reducing these computational costs.

\begin{figure}[t]
\centering
\includegraphics[width=0.8\linewidth]{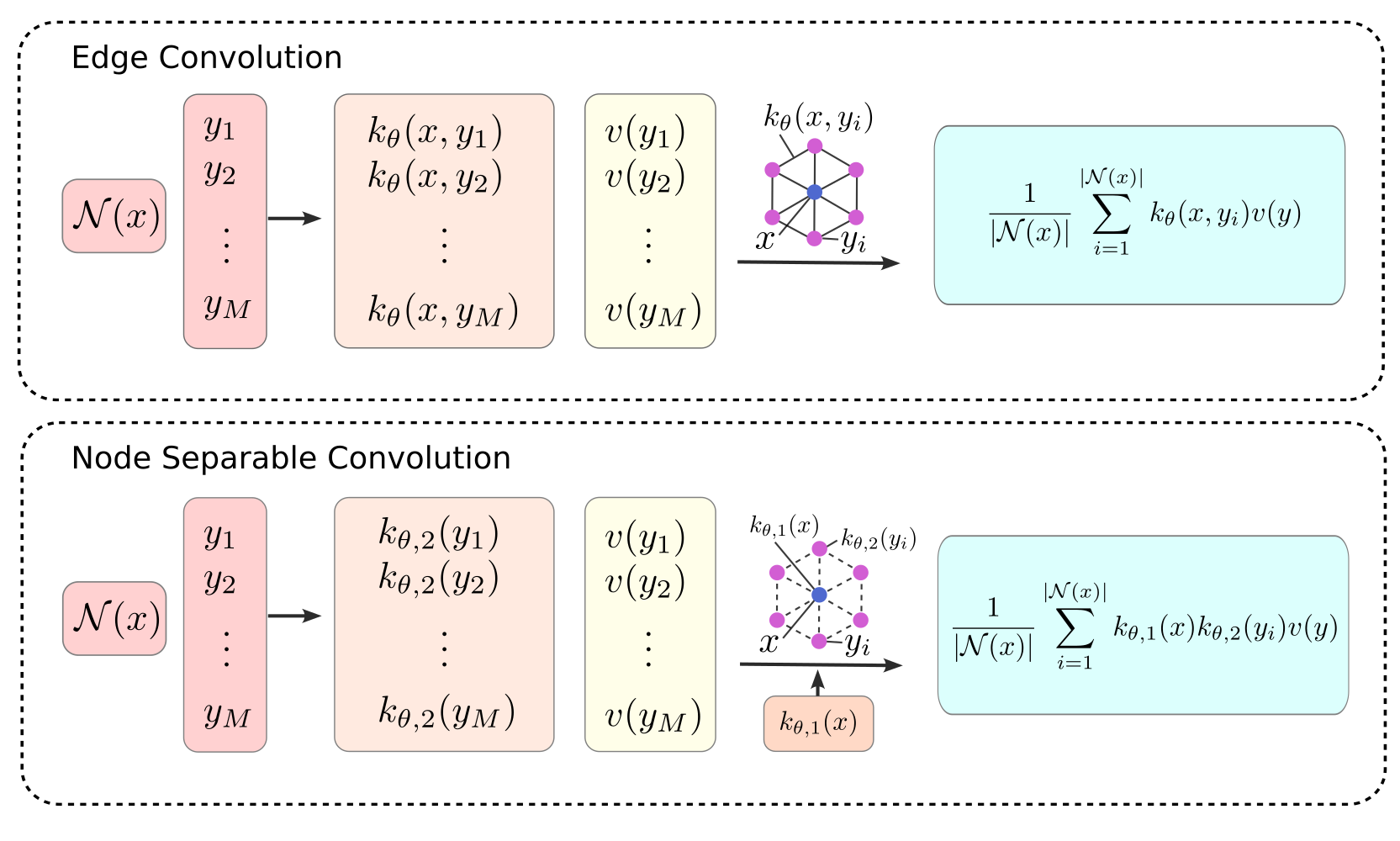}
\caption{\textbf{Kernel Integration: Methods for Improving Performance.} 
In neural operators a major computational cost during both training and evaluation
is to compute the kernel integral operations $\mathcal{K}[v]$.
An edge-conditioned convolution is widely used for this approximation,
but scales as $O(N^2)$ becoming prohibitively expensive as the density of points increases 
within the support of the kernel, \textit{(top)}. We develop more computationally efficient 
alternatives using node-conditioned convolutions based on factoring 
kernels into a separable form $k(x,y) = k_1(x)k_2(y)$. The node-based methods 
scale as $O(N)$ 
allowing for approximations using efficient gathering and scattering
operations greatly reducing computational costs, \textit{(bottom)}.
} 
\label{fig:separable_conv} 
\end{figure}

To simplify the message passing in equation~\ref{eq:integral_operation},
we consider here how to use separable representations of the kernel
network by factoring it as $k_\theta(x, y) = k_{\theta}^{(1)}(x)k_{\theta}^{(2)}(y)$. 
We find that this can be used to significantly reduce the costs to obtain 
more scalable node-conditioned gather-scatter operations.  Since
there is no longer an explicit dependence on the edge attributes, 
there are fewer compute steps and the methods are more amenable to 
parallelization. 
We also find that our separable architecture allows for training 
and evaluation on significantly larger point clouds without the 
need for sub-sampling edges in the graph, further improving accuracy
and computational performance. 

In more detail, the node-based message passing approach provides better scaling as the number of 
points $N$ grows. Each kernel network $k^{(i)}_\theta$ for $i=1,2$ need only be evaluated 
$O(N)$ times per layer. On the other hand, in the edge-based approach, the kernel 
$k_\theta$ must evaluate on the edges which grows as $O(N^2)$. In particular, 
for the point-cloud resolutions
we later consider in Section~\ref{sec:demonstration} with 
$N_{\text{train}} = 5000, N_{\text{test}}=10000$, the edge-based convolution
methods are so memory intensive we can not even perform the
computation for the Sobolev training with an 
Nvidia A40 GPU.
We also find when evaluating the node-based kernel using 
$N_{\text{test}}$ points takes 160ms compared to the edge-based case taking 
2400ms on an Nvidia A40 GPU.  Consistent with the asymptotic scaling in $N$,
these tests further show the impact of
the factorizations on the empirical efficiency gains.
The evaluation above already shows a reduction in the computational time of more than $10 \times$.  
As we show below and discuss in more detail, our separable neural operators 
are still able to achieve comparable accuracy to our 
previous edge-based results in~\cite{AtzbergerQuackenbushGNP2024}. 
The separable architectures allow
for processing larger point-clouds,
improve the efficiency of training, and speed up 
for pre-trained models the evaluation times for neural operators 
applied to new input functions.

\subsection{Sobolev Training of the Neural Operators}
\label{subsec:sobolev_training}
In our training of neural operators $\mathcal{G}_\theta$, we also consider the accuracy 
of the derivatives of the output functions by introducing Sobolev norms in
the loss function.  In the neural operators $\mathcal{G}_\theta$, we use within the first $L-1$ 
layers our separable kernel operators. These layers process the
coordinates $x$ along with the function values $f(x)$ based on 
\begin{equation}
    \label{eq:operator_layer_L-1}
    v_j(x, f(x)) \mapsto \sigma\pn{W_jv_j(x, f(x)) + 
    k_{\theta, 1}(x, f(x)) \int_{\mathcal{D}} k_{\theta,2}(y, f(y)) v_j(y, f(y)) \ dy}.
\end{equation}
This serves to encode the input function $f$ at the given points $x$. 
At the final operator layer $L$, we only process the coordinate data $x$
and omit the pointwise linear layer and activation, as 
also done in~\cite{Li2024a}.
 
During training, $\mathcal{G}_\theta$ learns to map $f(\cdot) \mapsto u(\cdot)$ 
for functions on the manifold $\mathcal{M}$. The $u$ is the solution for
the PDE having input $f$ in equation~\ref{eq:linear_pde}. To help ensure that $\mathcal{G}_\theta$ 
also learns the surface gradients in the embedding space coordinates, we also 
incorporate gradients in the loss as
\begin{equation}
\label{eq:sobolev_loss}
\mathcal{L}_{train}(u, \tilde u) =
\frac{\norm{u - \tilde u}_2}{\norm{u}_2}
+
\frac{\norm{\nabla_{\mathcal{M}}u - \nabla_{\mathcal{M}}\tilde u}_{2}}
{\norm{\nabla_{\mathcal{M}}u}_{2}}.
\end{equation}
We compute the surface gradients
\begin{equation}
    \label{eq:gradient_computation}
\nabla_{\mathcal{M}}u := \frac{\partial u}{\partial x^i}g^{ij} \bf{e}_j.
\end{equation}
This uses the local coordinate system $\mb{x} = (x_1,x_2)$ for the 
surface parameterization $\sigma(x_1,x_2)$ with  
$\bf{e}_j = \partial_j \sigma(x_1,x_2)$. This also uses the local inverse metric
tensor $g^{-1}$ with components $g^{ij}$. The $\bf{e}_j$ give the 
tangent vectors providing a local coordinate basis.  

To compute the derivatives of $u$ we use a pointwise operator network $\mathcal{Q}_\theta$.
This performs a projection of a function to the desired co-domain to obtain
$\tilde u(x)$
\begin{equation}
\label{eq:operator_layer_L}
\tilde u(x) = \mathcal{Q}_\theta\pn{k_{\theta, L}^{(1)}(x) \int_\mathcal{D}
k_{\theta,L}^{(2)}(y) v_{L-1}(y, f(y)) \ dy}.
\end{equation}
Derivatives can be computed efficiently using automatic differentiation since
these only involve the networks $\mathcal{Q}_\theta$ and $k_{\theta, L}^{(1)}(x)$.

We further perform a change of basis to represent functions in the embedding
space coordinates of $\R^3$. For training data, a local basis $\bf{e}_j$ 
and geometric quantities 
then can be computed locally using Generalized Moving Least Squares
(GMLS)~\cite{Atzberger2020,AtzbergerFPT2022,mirzaei2012generalized}. Since GMLS
uses a linear change of coordinates to form a local basis at each point, the
gradients computed from $\mathcal{G}_\theta$ can be easily transformed to the
local chart of the training data when computing 
equation~\ref{eq:gradient_computation}.
As we show below, our separable architectures for the neural operators perform
well with comparable accuracy to our previous edge-based operator results
in~\cite{AtzbergerQuackenbushGNP2024}. These architectures allow for significant
computational reductions in cost both during training and evaluation of the
learned operators on input functions.

\section{Results: Accuracy of the Neural Operators and Extension Methods}
\label{sec:demonstration}

We demonstrate the methods for empirically approximating the 
solution operators for geometric PDEs on manifolds. This requires
the neural operators to extract from the input functions 
and geometry the information relevant in constructing 
the solution functions. We show how our operator extension methods
can be used to obtain solution predictions from input functions beyond
those encountered during training. 
\begin{figure}
\centering
\includegraphics[width=1.\linewidth]{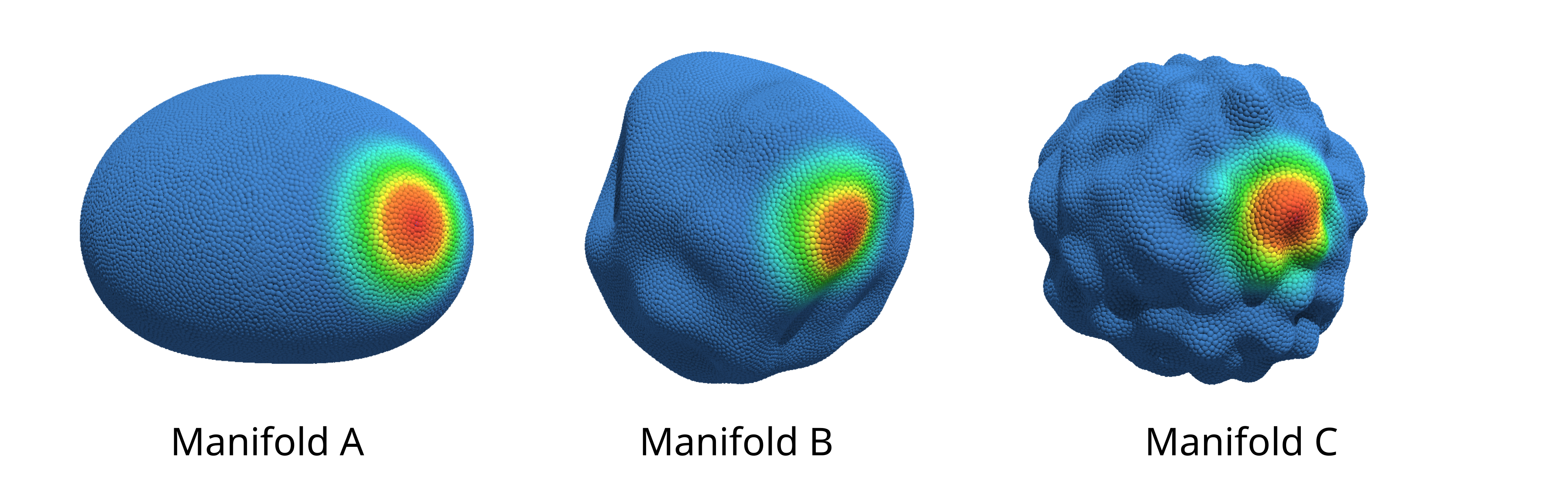}
\caption{\textbf{Manifold Shapes and Kernel Restrictions.} We show
the manifolds used in our comparison studies. We also show how the
Wendland kernel with parameter $k=0$ behaves when restricted to the manifold 
surface to obtain $k_{\mathcal{M}}$.
During training the different kernels are centered at a sampling
of locations $\{x_i\}$ on the manifold providing a collection 
of functions $k_{\sigma}(\cdot,x_i)$ for use during training.
}
\label{fig:manifolds_gaussian}
\end{figure}

We consider the geometric elliptic PDE given by the Laplace-Beltrami equation
\begin{equation}
    \label{eq:laplace_beltrami_pde}
    \begin{cases}
        \Delta_{\text{LB}} \ u(\mb{x}) &= -f(\mb{x}), \\
        \int_{\mathcal{M}} u(\bsy x) \, d\bsy x &= 0,
    \end{cases}
\end{equation} 
where 
\begin{equation}
\label{equ_laplace_beltrami}
\Delta_{LB} = \frac{1}{\sqrt{\abs{g}}} \partial_i \pn{g^{ij} \sqrt{\abs{g}} \partial_j}.
\end{equation}
The $\Delta_{LB}$ generalizes the Laplacian to scalar functions on surfaces
$\mathcal{M}$.  The metric tensor is $g$ with components $g_{ij}$
and the inverse metric tensor $g^{-1}$ with components $g^{ij}$. The 
determinant of the metric is denoted by $|g|$. The partial derivatives 
are taken with respect to the directions of the local coordinate system
at $x$. For more details, see~\cite{Pressley2010,abraham2012manifolds}.  
We perform studies by considering three radial manifolds $A, B, C$ with shapes
having different levels of complexity as shown in
Figure~\ref{fig:manifolds_gaussian}.  We use our operator extension methods
discussed in Section~\ref{sec:operator_extension} and
Figure~\ref{fig:extension_methods}. We use our methods to learn an 
approximation of the solution operator 
$u=\mathcal{S}[f]$ for equation~\ref{eq:laplace_beltrami_pde}.  We train using
kernel evaluations $k(\cdot,x_i)$ to obtain the learned
operator $\mathcal{S}_{k, \theta}$ and use our methods to extend this to
the operator $\mathcal{S}_{k, \theta}^e$ in Section~\ref{sec:operator_extension}
for input functions from the RKHS generated by $k(\cdot,\cdot)$. In our empirical and 
computational studies, we use
throughout the separable factorizations for the kernel operations 
as discussed in Section~\ref{sec:gnp}. 

In more detail, we train our neural operators by consider a training set of
functions of the form $k_\sigma(\cdot,x_i)$ for kernel responses 
with $x_i$ coming from a fixed subset of points $X_N \subset \mathcal{M}$.  
Once training is complete, we extend our neural operators by using our 
kernel approximation approach in Section~\ref{sec:operator_extension}
to obtain $\tilde f$ of $f$
using the regularized optimization problem given 
in equation~\ref{eq:representer} with $\lambda = 10^{-10}$.  
We use the learned neural operator $\mathcal{S}_{k, \theta}$ 
and our kernel methods to obtain the approximate solution 
$\tilde u(x) = \sum_{i=1}^N \alpha_i \mathcal{S}_{k, \theta}(k)(x, x_i)$.  This extends 
the neural operator $\mathcal{S}_{k, \theta}$ trained only on kernel functions $k_{\sigma}(\cdot,x_i)$
to the more general class of functions $f \in \mathcal{H}_k$ in the RKHS $\mathcal{H}_k$ 
generated by $k$. This gives the extended neural operator $\mathcal{S}_{k, \theta}^e$.

To obtain $\mathcal{S}_{k, \theta}$ for each choice of kernel $k(\cdot,\cdot)$
and manifold $\mathcal{M}$, 
we train the neural operator to map $k(\cdot,
x_i) \to \mathcal{S}[k(\cdot, x_i)]$ using $200$ training pairs of functions
$\set{(k(\cdot, x_i), \mathcal{S}[k(\cdot, x_i)]}$ evaluated at
$N_{\text{train}} = 10,000$ points. During training, we use Farthest Point
Sampling (FPS) to subsample $5,000$ out of $N_{\text{train}}$ points and
evaluate the loss in equation~\ref{eq:sobolev_loss}.  All our models consist of
$10$ layers and use for the activation function the Gaussian Error Linear Unit
(GELU)~\cite{Hendrycks2016}. Each kernel network consists of layer widths
$(d_{in}, w/4, w/2, w, d_v^2)$, use latent dimension $d_v=64$ and use width
$w=256$. The input dimension $d_{\text{in}}$ is $4$ for the first $9$ layers
and $3$ in the final layer, as described in Section~\ref{sec:gnp}. All training
is performed for $300$ epochs with an initial learning rate $1e-04$ that is
halved every $50$ epochs.

During operator evaluation, we use approximations based on 
equation~\ref{eq:representer} with $X_N$ using
$N=1,250$, $2,500$, $5,000$, and $10,000$ points.  We compute the relative Sobolev
$\mathcal{H}^1$ error between the true solution $u(x)$ and the pseudo-Green's
function approximant $\tilde u$ at $10,000$ points. We remark that this 
error includes both the function evaluations and the derivatives.
Our ability to train at a
resolution of $5,000$ points and test on $10,000$ points is made possible by
the approximate discretization invariance properties of our neural 
operators. Our neural operators have the property that we can train 
at one level of spatial resolution for the discretizations
and evaluate for new functions at other spatial resolutions. 
For each translation
invariant kernel, we consider two different values for the shape 
parameter $\sigma$ with $k(x, y) =
\Phi \pn{\sigma \pn{x-y}}$. For Gaussian and Mat\'ern ($\nu=\tfrac{1}{2},
\tfrac{3}{2}, \tfrac{5}{2}$) kernels, we use $\sigma=5, 10$. For the Wendland
Kernels of order $k=0, 1, 2$, we use $\sigma=\tfrac{5}{3}, \tfrac{10}{3}$ so
that all kernels have comparable support. 

We use functions for testing that are generated using band-limited spherical
harmonics. For each of the manifolds, we use right-hand side source functions
in equation~\ref{eq:laplace_beltrami_pde} that are generated building on our
previous work on spherical harmonics approaches, geometric PDE solvers, and 
the sphericart
Python package~\cite{sphericart,Atzberger2018b,sigurdsson2016hydrodynamic,rower2023coarse}.
For each function, coefficients were sampled from a normal distribution with
standard deviation inversely proportional to the order squared of the spherical
harmonic up to a specified maximal degree. The maximal degrees used were $3, 6,
8, 10, 12, 15, 18, 22$. We also used $5$ functions for each maximal degree.

\begin{table}[H]
\centering
\resizebox{0.7\linewidth}{!}{
\begin{tabular}{|l|l|l|l|l|l|}
\hline
\rowcolor{black!20!white}
\textbf{Kernel} & \textbf{Shape} $\bsy{\sigma}$ & $\bsy{N=1250}$ &
$\bsy{2500}$ & $\bsy{5000}$ & $\bsy{10000}$ \\ \hline \hline
\rowcolor{black!20!white}
\textbf{Manifold $A$}   & & & & &\\
\hline
\rowcolor{black!5!white}
Gaussian & 10 & 2.16e-01 & 2.18e-01 & 6.69e-01 & \textbf{7.76e+00} \\
\rowcolor{black!5!white}
Gaussian & 5 & \textbf{2.80e+03} & \textbf{1.85e+04} & \textbf{2.80e+04} & \textbf{4.26e+04} \\
\hline
\rowcolor{black!5!white}
Mat\'ern, $\nu=1/2$ & 10 & 1.14e-01 & 1.04e-01 & 1.02e-01 & 1.02e-01 \\
\rowcolor{black!5!white}
Mat\'ern, $\nu=1/2$ & 5 & 1.52e-01 & 1.53e-01 & 1.54e-01 & 1.54e-01 \\
\rowcolor{black!5!white}
Mat\'ern, $\nu=3/2$ & 10 & 6.48e-02 & 6.29e-02 & 6.27e-02 & 6.27e-02 \\
\rowcolor{black!5!white}
Mat\'ern, $\nu=3/2$ & 5 & 1.04e-01 & 1.05e-01 & 1.05e-01 & 1.05e-01 \\
\rowcolor{black!5!white}
Mat\'ern, $\nu=5/2$ & 10 & 7.51e-02 & 7.48e-02 & 7.48e-02 & 7.48e-02 \\
\rowcolor{black!5!white}
Mat\'ern, $\nu=5/2$ & 5 & 8.57e-02 & 8.62e-02 & 8.62e-02 & 8.62e-02 \\
\hline
\rowcolor{black!5!white}
Wendland, $k=0$ & 10/3 & 3.27e-01 & 3.24e-01 & 3.23e-01 & 3.23e-01 \\
\rowcolor{black!5!white}
Wendland, $k=0$ & 5/3 & 3.67e-01 & 3.70e-01 & 3.71e-01 & 3.72e-01 \\
\rowcolor{black!5!white}
Wendland, $k=1$ & 10/3 & 2.28e-01 & 2.27e-01 & 2.27e-01 & 2.27e-01 \\
\rowcolor{black!5!white}
Wendland, $k=1$ & 5/3 & 9.07e-02 & 9.20e-02 & 9.23e-02 & 9.23e-02 \\
\rowcolor{black!5!white}
Wendland, $k=2$ & 10/3 & 5.67e-02 & 5.64e-02 & 5.65e-02 & 5.65e-02 \\
\rowcolor{black!5!white}
Wendland, $k=2$ & 5/3 & 7.62e-02 & 7.72e-02 & 7.72e-02 & 7.72e-02 \\
\hline
\hline
\rowcolor{black!20!white}
\textbf{Manifold $B$}   & & & & &\\
\hline
\rowcolor{black!5!white}
Gaussian & 10 & 8.04e-02 & 8.37e-02 & 9.14e-01 & \textbf{8.38e+00} \\
\rowcolor{black!5!white}
Gaussian & 5 & \textbf{2.04e+01} & \textbf{2.12e+03} & \textbf{3.87e+03} & \textbf{5.80e+03} \\
\hline
\rowcolor{black!5!white}
Mat\'ern, $\nu=1/2$ & 10 & 5.82e-01 & 5.79e-01 & 5.78e-01 & 5.77e-01 \\
\rowcolor{black!5!white}
Mat\'ern, $\nu=1/2$ & 5 & 1.29e-01 & 1.31e-01 & 1.32e-01 & 1.33e-01 \\
\rowcolor{black!5!white}
Mat\'ern, $\nu=3/2$ & 10 & 1.19e-01 & 1.19e-01 & 1.19e-01 & 1.19e-01 \\
\rowcolor{black!5!white}
Mat\'ern, $\nu=3/2$ & 5 & 1.47e-01 & 1.47e-01 & 1.48e-01 & 1.48e-01 \\
\rowcolor{black!5!white}
Mat\'ern, $\nu=5/2$ & 10 & 9.42e-02 & 9.35e-02 & 9.35e-02 & 9.34e-02 \\
\rowcolor{black!5!white}
Mat\'ern, $\nu=5/2$ & 5 & 1.16e-01 & 1.17e-01 & 1.17e-01 & 1.17e-01 \\
\hline
\rowcolor{black!5!white}
Wendland, $k=0$ & 10/3 & 9.75e-02 & 8.84e-02 & 8.68e-02 & 8.65e-02 \\
\rowcolor{black!5!white}
Wendland, $k=0$ & 5/3 & 9.52e-02 & 9.78e-02 & 9.92e-02 & 9.97e-02 \\
\rowcolor{black!5!white}
Wendland, $k=1$ & 10/3 & 2.66e-01 & 2.65e-01 & 2.65e-01 & 2.65e-01 \\
\rowcolor{black!5!white}
Wendland, $k=1$ & 5/3 & 9.68e-02 & 9.79e-02 & 9.81e-02 & 9.82e-02 \\
\rowcolor{black!5!white}
Wendland, $k=2$ & 10/3 & 8.29e-02 & 8.24e-02 & 8.25e-02 & 8.25e-02 \\
\rowcolor{black!5!white}
Wendland, $k=2$ & 5/3 & 9.47e-02 & 9.62e-02 & 9.63e-02 & 9.63e-02 \\
\hline
\hline
\rowcolor{black!20!white}
\textbf{Manifold $C$}   & & & & &\\
\hline
\rowcolor{black!5!white}
Gaussian & 10 & 1.54e-01 & 1.55e-01 & 1.92e-01 & \textbf{3.44e+00} \\
\rowcolor{black!5!white}
Gaussian & 5 & \textbf{3.16e+00} & \textbf{1.74e+02} & \textbf{1.86e+03} & \textbf{2.88e+03} \\
\hline
\rowcolor{black!5!white}
Mat\'ern, $\nu=1/2$ & 10 & 1.45e-01 & 1.35e-01 & 1.32e-01 & 1.32e-01 \\
\rowcolor{black!5!white}
Mat\'ern, $\nu=1/2$ & 5 & 1.41e-01 & 1.42e-01 & 1.43e-01 & 1.43e-01 \\
\rowcolor{black!5!white}
Mat\'ern, $\nu=3/2$ & 10 & 1.18e-01 & 1.17e-01 & 1.17e-01 & 1.17e-01 \\
\rowcolor{black!5!white}
Mat\'ern, $\nu=3/2$ & 5 & 1.40e-01 & 1.41e-01 & 1.41e-01 & 1.41e-01 \\
\rowcolor{black!5!white}
Mat\'ern, $\nu=5/2$ & 10 & 1.12e-01 & 1.12e-01 & 1.12e-01 & 1.12e-01 \\
\rowcolor{black!5!white}
Mat\'ern, $\nu=5/2$ & 5 & 1.62e-01 & 1.64e-01 & 1.65e-01 & 1.66e-01 \\
\hline
\rowcolor{black!5!white}
Wendland, $k=0$ & 10/3 & 1.66e-01 & 1.59e-01 & 1.58e-01 & 1.58e-01 \\
\rowcolor{black!5!white}
Wendland, $k=0$ & 5/3 & 1.42e-01 & 1.44e-01 & 1.46e-01 & 1.46e-01 \\
\rowcolor{black!5!white}
Wendland, $k=1$ & 10/3 & 1.49e-01 & 1.47e-01 & 1.47e-01 & 1.47e-01 \\
\rowcolor{black!5!white}
Wendland, $k=1$ & 5/3 & 1.42e-01 & 1.47e-01 & 1.49e-01 & 1.49e-01 \\
\rowcolor{black!5!white}
Wendland, $k=2$ & 10/3 & 1.14e-01 & 1.13e-01 & 1.13e-01 & 1.13e-01 \\
\rowcolor{black!5!white}
Wendland, $k=2$ & 5/3 & 1.32e-01 & 1.42e-01 & 1.46e-01 & 1.48e-01 \\
\hline
\end{tabular}
}
\caption{\textbf{Relative }$\bsy{\mathcal{H}^1}$\textbf{-norm Accuracy.} We show 
results for the accuracy of the extended neural operators when used to obtain
solutions of equation~\ref{eq:laplace_beltrami_pde}.  We show in bold all
relative errors above $10^0$.  The Gaussian kernels are found to perform 
poorly as $N$ increases.  The extension methods using the Mat\'ern and Wendland
kernels are found to perform well as $N$ increases.
}
\label{tab:gnp_h1_error}

\end{table}

We study the relative accuracy in the $\mathcal{H}^1$-norm
for our extended neural operator predictions of solution functions 
$\tilde u$ for the geometric PDE in equation~\ref{eq:pseudo_green_solution}.
We report values for each manifold-kernel pair averaged over all test functions
in Table~\ref{tab:gnp_h1_error}.  We find the Gaussian kernels perform significantly
worse than the Mat\'ern and Wendland kernels.
As part of our empirical studies when solving equation~\ref{eq:representer}
with $N$ kernel points, we track the $\ell^1$-norm of the coefficients
$\set{\alpha_i}$ in equation~\ref{eq:rkhs_approximant}.  
The $\ell^1$-norm contributes to the error estimate through the term $C_2$ in 
Theorems~\ref{thm:op_extend} in equation~\ref{thm:op_extend_manifold}.  
We report these
results in Table~\ref{tab:gnp_l1_coefficient_norm}.

We find in the case of the Mat\'ern and Wendland
kernels that the $\norm{\alpha}_{\ell^1}$ is on the order of $10^3$ or less.
These kernels maintain
a similar order of magnitude as $N$ increases. In contrast for Gaussian kernels,
the $\norm{\alpha}_{\ell^1}$ increases dramatically as $N$ increases. This 
appears to be related to the well-known ill-conditioning of the 
Gram matrix $K$ for Gaussian kernels as the 
separation radius of the point sampling decreases. 
Additional discussions of the general theory of stability of kernel methods
can be found in~\cite{Wendland2004}.
We present condition numbers
for all Gram matrices used our studies in Appendix~\ref{sec:appendix_a}. 

Our results show there is a significant impact of the ill-conditioning of the
Gaussian kernel which greatly amplifies the error in the learned pseudo-Green's
functions.  This degrades for Gaussians as the amount of overlap between kernel
functions increases.  We see this is most pronounced when the kernel has a
larger support as the $\sigma$ value becomes smaller, 
see Table~\ref{tab:gnp_h1_error}.

We find when solving equation~\ref{eq:representer} the Mat\'ern kernels
exhibit significantly better conditioning in $K$ than Gaussian kernels.
This results in  
better approximation
when super-imposing the pseudo-Green's functions.  Further, we see in almost
all cases that for the Mat\'ern kernels there is more consistency exhibited
when varying the number of kernel points $N$ when solving
equation~\ref{eq:representer}.  For the manifold $A$, all errors vary between
$6\%$ and $15\%$ and exhibit better approximation when taking $\sigma=10$.  The
lowest error of $6.3\%$ occurs when $\nu=3/2$.  The manifold $B$ also generally
shows better results when $\sigma=10$ with errors varying between $9\%$ and
$15\%$ with the exception of $\nu=1/2$. When $\nu=1/2$ and $\sigma=10$, 
we find the larger $\mathcal{H}^1$ errors are a result of inaccurate gradient
approximations during training, see discussions in 
Appendix~\ref{sec:appendix_a}. The increase in errors for the manifold $C$ is
somewhat expected since it has a more complicated shape and geometric variations. 
However, despite this we still see
our methods are able to achieve consistent results 
between $11\%$ and $17\%$ error. We also remark that the
trends across all manifolds for the Mat\'ern kernel show better results for
$\sigma=10$ and $\nu > 1/2$.

Wendland kernels differ from the Mat\'ern and Gaussian kernels by having compact
support vanishing outside a ball of radius $r$. In practice, this results in 
efficiencies in evaluation and even 
better conditioning in
the Gram matrix $K$ especially when compared to the Gaussian
kernels~\cite{Wendland2004}. As a consequence of Wendland kernel's compact support, 
our neural operator training methods are variable due to the lack of signal at 
locations outside of the kernel support. We see this when evaluating on the 
manifolds $A$, $B$, particularly when $k < 2$. Despite this, we still see 
strong performance for order $k=2$ across all three manifolds, which 
attains for manifolds $A, B, C$ errors of $5.7\%$, $8.3\%$, and $11.4\%$.

Our results show the trade-offs that occur between the choice of kernel type,
hyper-parameters, and geometry. Our studies further show choosing a kernel with smaller
support (larger $\sigma$) 
enables better conditioning of the Gram Matrix which in turn reduces the 
upper bound $C_2$ in Theorem~\ref{thm:op_extend} and Theorem~\ref{thm:op_extend_manifold}.
We also find if $\sigma$ is too large this can lead to difficulties in training for the GNPs,
as would occur also for other neural operators and kernel methods. Further, we find  
choosing smoother kernels with good conditioning such as the Mat\'ern ($\nu=5/2)$ or 
Wendland ($k=2$) kernels achieves a good balance yielding the best results across 
all three test manifolds $A, B, C$, see Table~\ref{tab:gnp_h1_error}. 
These results show the importance
of using alternatives to Gaussian kernels to obtain good performance.

\begin{table}[htbp]
\centering
\resizebox{0.85\linewidth}{!}{
\begin{tabular}{|l|l|l|l|l|l|}
\hline
\rowcolor{black!20!white}
 \textbf{Kernel} & \textbf{Shape} $\bsy \sigma$ & $\bsy{N=1250}$ & 
 $\bsy{2500}$ & $\bsy{5000}$ & $\bsy{10000}$ \\ \hline
\rowcolor{black!5!white}
Gaussian & 10 & 1.34e+03 & 6.65e+03 & \textbf{6.32e+05} & \textbf{2.05e+07} \\
\rowcolor{black!5!white}
Gaussian & 5 & \textbf{2.05e+08} & \textbf{3.08e+09} & \textbf{6.84e+09} & \textbf{1.13e+10} \\
\hline
\rowcolor{black!5!white}
Mat\'ern, $\nu=1/2$ & 10 & 1.28e+03 & 1.49e+03 & 1.59e+03 & 1.63e+03 \\
\rowcolor{black!5!white}
Mat\'ern, $\nu=1/2$ & 5 & 1.53e+03 & 1.79e+03 & 1.91e+03 & 1.96e+03 \\
\rowcolor{black!5!white}
Mat\'ern, $\nu=3/2$ & 10 & 1.17e+03 & 1.26e+03 & 1.30e+03 & 1.32e+03 \\
\rowcolor{black!5!white}
Mat\'ern, $\nu=3/2$ & 5 & 2.26e+03 & 2.57e+03 & 2.77e+03 & 2.89e+03 \\
\rowcolor{black!5!white}
Mat\'ern, $\nu=5/2$ & 10 & 1.15e+03 & 1.25e+03 & 1.34e+03 & 1.46e+03 \\
\rowcolor{black!5!white}
Mat\'ern, $\nu=5/2$ & 5 & 3.64e+03 & 4.87e+03 & 6.58e+03 & 8.43e+03 \\
\hline
\rowcolor{black!5!white}
Wendland, $k=0$ & 10/3 & 1.21e+03 & 1.36e+03 & 1.43e+03 & 1.47e+03 \\
\rowcolor{black!5!white}
Wendland, $k=0$ & 5/3 & 1.96e+03 & 2.31e+03 & 2.47e+03 & 2.55e+03 \\
\rowcolor{black!5!white}
Wendland, $k=1$ & 10/3 & 1.22e+03 & 1.31e+03 & 1.38e+03 & 1.43e+03 \\
\rowcolor{black!5!white}
Wendland, $k=1$ & 5/3 & 3.38e+03 & 3.98e+03 & 4.41e+03 & 4.67e+03 \\
\rowcolor{black!5!white}
Wendland, $k=2$ & 10/3 & 4.64e+02 & 4.95e+02 & 5.52e+02 & 6.25e+02 \\
\rowcolor{black!5!white}
Wendland, $k=2$ & 5/3 & 1.26e+03 & 1.94e+03 & 2.88e+03 & 3.93e+03 \\
\hline
\end{tabular}
}
\caption{\textbf{Coefficient $\bsy{\ell^1}$-norms and $\bsy{C_2}$.}  We show  
the $\|\alpha\|_{\ell^1}$ of the coefficients 
$\{\alpha_i\}$  in equation~\ref{eq:rkhs_approximant} when solving 
equation~\ref{eq:representer}. Results are averaged over all functions and manifolds.
This gives an indication of the size of $C_2 = \|\alpha\|_{\ell^1}$ in the 
accuracy bounds in Theorem~\ref{thm:op_extend_manifold} in equation~\ref{eq:thm_ext_acc_M}.
We highlight in bold all values larger than $10^4$.  As the
number of sample points increases, we see 
the Gaussian kernels have especially large values 
exceeding those of our other kernels by a few orders of
magnitude.  Relative to the other kernels, this indicates 
using Gaussians in the extension methods could be 
inefficient and sensitive to the accuracy of the trained 
neural operators, see 
equation~\ref{eq:thm_ext_acc_M}. 
}
\label{tab:gnp_l1_coefficient_norm}
\end{table}

\section*{Conclusions}

We have shown how learned neural operators can be refined for enhanced
robustness and improved generalization to out-of-distribution input functions.
By leveraging kernel approximation techniques, we achieve well-controlled
responses across a broad range of functions beyond those encountered in the
training data. We show that selecting kernels corresponding to
Reproducing Kernel Hilbert Spaces (RKHSs) associated with Sobolev
spaces enables learned operators and extensions to capture mappings 
of both functions and their
derivatives. Beyond our methodological contributions, we establish a
theoretical framework to characterize the approximation accuracy and 
identify key error factors. This includes the critical roles 
played by the point sampling, kernel type, and kernel bandwidth. 
The practical utility of the methods was validated through applications 
for solving elliptic PDEs, operators on manifolds having point-cloud 
representations, and handling geometric contributions.  The methods 
offer systematic approaches for empirically controlling 
and extending neural operators for diverse learning tasks.

\section*{Acknowledgements}

Authors research supported by grant NSF Grant DMS-2306101 and NSF Grant NSF-DMS-2306345.  Authors also would
like to acknowledge computational resources and administrative support at
the UCSB Center for Scientific Computing (CSC) with grants
NSF-CNS-1725797, MRSEC: NSF-DMR-2308708, Pod-GPUs: OAC-1925717, and support from
the California NanoSystems Institute (CNSI) at UCSB.  P.J.A. also would
like to acknowledge a hardware grant from Nvidia.

\newpage

\printbibliography

\appendix

\section{Additional Results: Role of Kernel Choices and Sources of Error}
\label{sec:appendix_a}

We present additional results on the condition number of the Gram matrix for
the various kernels and resolutions prior to adding any regularization when
solving equation~\ref{eq:representer}.  These results are shown in in
Table~~\ref{tab:condition_number}.  Our results especially highlight well-known
ill-conditioning phenomenon that can arise when using Gaussian kernels.  Our
results further show how Mat\'ern and Wendland kernels can be used to obtain
much better conditioning for systems. We also show how the condition number
grows with the number of points $N$ and the impact of the smoothness parameters
$\nu$ and $k$, see Table~~\ref{tab:condition_number}.

We also show results for the relative $L^2$-errors for $\tilde u$ and $\nabla
\tilde u$ in Tables~\ref{tab:gnp_l2_error} and~\ref{tab:gnp_l2_grad_error}.  We
find in our studies that nearly all of the relative errors in
Table~\ref{tab:gnp_l2_error} are below 10\%. This highlights that the majority
of the contributions to the $\mathcal{H}^1$ error in
Table~\ref{tab:gnp_h1_error} arise from errors in approximating the gradient
$\nabla u$.  For the Gaussian kernel case, we see again signs of the
ill-conditioning arising even in the error for the functions $u$.  This becomes
increasingly an issue for Gaussian kernels as $N$ becomes large when
approximating the the source function $f$.  These errors then propagate through
the other calculations in the Gaussian kernel case, see 
Tables~\ref{tab:gnp_l2_error} and~\ref{tab:gnp_l2_grad_error}. 

In summary, our studies and results further highlight the importance of the choice of the
radial kernel functions used in our extension methods and for training neural
operators more generally. When using alternative kernels, such as the Mat\'ern
and Wendland kernels we find much better performance both in capturing the
functions $u$ and even in some cases their derivatives.  
\begin{table}[htb]
\centering
\resizebox{0.86\linewidth}{!}{
\begin{tabular}{|l|l|l|l|l|l|}
\hline
\rowcolor{black!20!white}
\textbf{Kernel} & \textbf{Shape} $\bsy{\sigma}$ & $\bsy{N=1250}$ & 
$\bsy{2500}$ & $\bsy{5000}$ & $\bsy{10000}$ \\ \hline
\rowcolor{black!5!white}
Gaussian & 10 & 1.07e+03 & 7.45e+05 & \textbf{3.99e+11} & \textbf{6.36e+19} \\
\rowcolor{black!5!white}
Gaussian & 5 & \textbf{1.21e+10} & \textbf{5.36e+18} & \textbf{1.19e+20} & \textbf{1.23e+21} \\
\hline
\rowcolor{black!5!white}
Mat\'ern, $\nu=1/2$ & 10 & 1.61e+01 & 4.33e+01 & 1.34e+02 & 3.59e+02 \\
\rowcolor{black!5!white}
Mat\'ern, $\nu=1/2$ & 5 & 1.19e+02 & 3.36e+02 & 1.07e+03 & 2.84e+03 \\
\rowcolor{black!5!white}
Mat\'ern, $\nu=3/2$ & 10 & 3.72e+01 & 1.77e+02 & 1.25e+03 & 6.01e+03 \\
\rowcolor{black!5!white}
Mat\'ern, $\nu=3/2$ & 5 & 8.81e+02 & 4.82e+03 & 3.73e+04 & 1.83e+05 \\
\rowcolor{black!5!white}
Mat\'ern, $\nu=5/2$ & 10 & 6.43e+01 & 4.83e+02 & 7.11e+03 & 6.14e+04 \\
\rowcolor{black!5!white}
Mat\'ern, $\nu=5/2$ & 5 & 4.12e+03 & 4.22e+04 & 7.61e+05 & \textbf{7.06e+06} \\
\hline
\rowcolor{black!5!white}
Wendland, $k=0$ & 10/3 & 1.76e+01 & 4.90e+01 & 1.55e+02 & 4.16e+02 \\
\rowcolor{black!5!white}
Wendland, $k=0$ & 5/3 & 1.33e+02 & 3.77e+02 & 1.19e+03 & 3.21e+03 \\
\rowcolor{black!5!white}
Wendland, $k=1$ & 10/3 & 4.27e+01 & 2.33e+02 & 1.82e+03 & 9.03e+03 \\
\rowcolor{black!5!white}
Wendland, $k=1$ & 5/3 & 1.21e+03 & 6.93e+03 & 5.51e+04 & 2.74e+05 \\
\rowcolor{black!5!white}
Wendland, $k=2$ & 10/3 & 3.27e+01 & 3.36e+02 & 6.29e+03 & 5.95e+04 \\
\rowcolor{black!5!white}
Wendland, $k=2$ & 5/3 & 3.32e+03 & 3.81e+04 & 7.36e+05 & \textbf{7.05e+06} \\
\hline
\end{tabular}
}
\caption{\textbf{Condition Numbers for the Gram Matrix.} We 
show results for the Gram Matrix $K_{ij} = k(x_i,x_j)$ for 
the Gaussian, Mat\'ern, and Wendland kernels.  We highlight in 
bold conditions numbers larger than $10^6$. 
We find as $N$ increases 
the Gaussian kernels produce particularly ill-conditioned 
gram matrices which are orders of magnitude larger than the other kernels.
The condition numbers are seen to grow much more slowly for the 
Mat\'ern and Wendland kernels.  The larger condition 
numbers can be seen to correspond to the less accurate cases
in Table~\ref{tab:gnp_l1_coefficient_norm}.
These results further highlight 
importance of 
the choice of kernel when using the extension methods.
}
\label{tab:condition_number}
\end{table}

\newpage
\begin{table}[p]
\centering
\resizebox{0.7\linewidth}{!}{
\begin{tabular}{|l|l|l|l|l|l|}
\hline
\rowcolor{black!20!white}
 \textbf{Kernel} & \textbf{Shape} $\bsy{\sigma}$ & $\bsy{N = 1250}$ & $\bsy{2500}$ 
& $\bsy{5000}$ & $\bsy{10000}$ \\ \hline \hline
\rowcolor{black!20!white}
\textbf{Manifold $A$}  & & & & & \\
\hline
\rowcolor{black!5!white}
Gaussian & 10 & 1.10e-01 & 1.11e-01 & 5.02e-01 & \textbf{5.13e+00} \\
\rowcolor{black!5!white}
Gaussian & 5 & \textbf{1.79e+03} & \textbf{1.25e+04} & \textbf{1.94e+04} & \textbf{2.99e+04} \\
\hline
\rowcolor{black!5!white}
Mat\'ern, $\nu=1/2$ & 10 & 6.09e-02 & 4.50e-02 & 4.26e-02 & 4.24e-02 \\
\rowcolor{black!5!white}
Mat\'ern, $\nu=1/2$ & 5 & 5.37e-02 & 5.45e-02 & 5.52e-02 & 5.55e-02 \\
\rowcolor{black!5!white}
Mat\'ern, $\nu=3/2$ & 10 & 2.93e-02 & 2.81e-02 & 2.83e-02 & 2.83e-02 \\
\rowcolor{black!5!white}
Mat\'ern, $\nu=3/2$ & 5 & 3.85e-02 & 3.90e-02 & 3.92e-02 & 3.92e-02 \\
\rowcolor{black!5!white}
Mat\'ern, $\nu=5/2$ & 10 & 3.13e-02 & 3.12e-02 & 3.13e-02 & 3.13e-02 \\
\rowcolor{black!5!white}
Mat\'ern, $\nu=5/2$ & 5 & 4.73e-02 & 4.77e-02 & 4.77e-02 & 4.77e-02 \\
\hline
\rowcolor{black!5!white}
Wendland, $k=0$ & 10/3 & 2.12e-01 & 2.08e-01 & 2.07e-01 & 2.07e-01 \\
\rowcolor{black!5!white}
Wendland, $k=0$ & 5/3 & 6.11e-02 & 6.43e-02 & 6.58e-02 & 6.64e-02 \\
\rowcolor{black!5!white}
Wendland, $k=1$ & 10/3 & 9.18e-02 & 9.17e-02 & 9.18e-02 & 9.18e-02 \\
\rowcolor{black!5!white}
Wendland, $k=1$ & 5/3 & 4.94e-02 & 5.08e-02 & 5.11e-02 & 5.11e-02 \\
\rowcolor{black!5!white}
Wendland, $k=2$ & 10/3 & 2.81e-02 & 3.07e-02 & 3.11e-02 & 3.11e-02 \\
\rowcolor{black!5!white}
Wendland, $k=2$ & 5/3 & 3.30e-02 & 3.33e-02 & 3.33e-02 & 3.33e-02 \\
\hline
\hline
\rowcolor{black!20!white}
\textbf{Manifold $B$}   & & & & &\\
\hline
\rowcolor{black!5!white}
Gaussian & 10 & 2.77e-02 & 2.92e-02 & 3.75e-01 & \textbf{3.88e+00} \\
\rowcolor{black!5!white}
Gaussian & 5 & \textbf{1.15e+01} & \textbf{1.27e+03} & \textbf{2.25e+03} & \textbf{3.44e+03} \\
\hline
\rowcolor{black!5!white}
Mat\'ern, $\nu=1/2$ & 10 & 7.51e-02 & 6.11e-02 & 5.82e-02 & 5.77e-02 \\
\rowcolor{black!5!white}
Mat\'ern, $\nu=1/2$ & 5 & 5.57e-02 & 5.78e-02 & 5.85e-02 & 5.88e-02 \\
\rowcolor{black!5!white}
Mat\'ern, $\nu=3/2$ & 10 & 3.34e-02 & 3.16e-02 & 3.15e-02 & 3.15e-02 \\
\rowcolor{black!5!white}
Mat\'ern, $\nu=3/2$ & 5 & 5.64e-02 & 5.75e-02 & 5.76e-02 & 5.76e-02 \\
\rowcolor{black!5!white}
Mat\'ern, $\nu=5/2$ & 10 & 2.79e-02 & 2.77e-02 & 2.78e-02 & 2.78e-02 \\
\rowcolor{black!5!white}
Mat\'ern, $\nu=5/2$ & 5 & 6.06e-02 & 6.16e-02 & 6.17e-02 & 6.18e-02 \\
\rowcolor{black!5!white}
\hline
\rowcolor{black!5!white}
Wendland, $k=0$ & 10/3 & 4.95e-02 & 3.28e-02 & 2.96e-02 & 2.92e-02 \\
\rowcolor{black!5!white}
Wendland, $k=0$ & 5/3 & 5.09e-02 & 5.41e-02 & 5.55e-02 & 5.60e-02 \\
\rowcolor{black!5!white}
Wendland, $k=1$ & 10/3 & 6.35e-02 & 6.55e-02 & 6.60e-02 & 6.61e-02 \\
\rowcolor{black!5!white}
Wendland, $k=1$ & 5/3 & 4.87e-02 & 4.99e-02 & 5.01e-02 & 5.02e-02 \\
\rowcolor{black!5!white}
Wendland, $k=2$ & 10/3 & 2.71e-02 & 2.78e-02 & 2.80e-02 & 2.81e-02 \\
\rowcolor{black!5!white}
Wendland, $k=2$ & 5/3 & 3.88e-02 & 3.97e-02 & 4.00e-02 & 4.00e-02 \\
\hline
\hline
\rowcolor{black!20!white}
\textbf{Manifold $C$}   & & & & &\\
\hline
\rowcolor{black!5!white}
Gaussian & 10 & 3.62e-02 & 4.33e-02 & 1.05e-01 & \textbf{2.97e+00} \\
\rowcolor{black!5!white}
Gaussian & 5 & \textbf{1.54e+00} & \textbf{7.43e+01} & \textbf{7.33e+02} & \textbf{1.16e+03} \\
\hline
\rowcolor{black!5!white}
Mat\'ern, $\nu=1/2$ & 10 & 5.86e-02 & 4.03e-02 & 3.79e-02 & 3.79e-02 \\
\rowcolor{black!5!white}
Mat\'ern, $\nu=1/2$ & 5 & 5.93e-02 & 6.38e-02 & 6.60e-02 & 6.69e-02 \\
\rowcolor{black!5!white}
Mat\'ern, $\nu=3/2$ & 10 & 3.59e-02 & 3.46e-02 & 3.51e-02 & 3.53e-02 \\
\rowcolor{black!5!white}
Mat\'ern, $\nu=3/2$ & 5 & 6.81e-02 & 7.12e-02 & 7.22e-02 & 7.25e-02 \\
\rowcolor{black!5!white}
Mat\'ern, $\nu=5/2$ & 10 & 3.54e-02 & 3.38e-02 & 3.37e-02 & 3.37e-02 \\
\rowcolor{black!5!white}
Mat\'ern, $\nu=5/2$ & 5 & 8.02e-02 & 8.72e-02 & 8.98e-02 & 9.07e-02 \\
\hline
\rowcolor{black!5!white}
Wendland, $k=0$ & 10/3 & 6.07e-02 & 4.70e-02 & 4.60e-02 & 4.64e-02 \\
\rowcolor{black!5!white}
Wendland, $k=0$ & 5/3 & 6.43e-02 & 7.01e-02 & 7.31e-02 & 7.42e-02 \\
\rowcolor{black!5!white}
Wendland, $k=1$ & 10/3 & 4.74e-02 & 4.28e-02 & 4.23e-02 & 4.21e-02 \\
\rowcolor{black!5!white}
Wendland, $k=1$ & 5/3 & 6.73e-02 & 7.71e-02 & 8.07e-02 & 8.18e-02 \\
\rowcolor{black!5!white}
Wendland, $k=2$ & 10/3 & 3.06e-02 & 2.61e-02 & 2.58e-02 & 2.57e-02 \\
\rowcolor{black!5!white}
Wendland, $k=2$ & 5/3 & 6.25e-02 & 8.14e-02 & 8.82e-02 & 9.04e-02 \\
\hline
\end{tabular}
}
\caption{\textbf{$\bsy{L^2}$-norm Relative Errors for 
$\bsy{\tilde{u}}$}. We show the accuracy of approximating 
$\bsy {\tilde u}$ when solving equation~\ref{eq:representer}.  We show in bold
errors larger than $1.0$.  The accuracy of these results indicates that the main source
of error in the Sobolev-norm in Table~\ref{tab:gnp_h1_error} likely arises from 
approximation of the gradients. We report these results in Table~\ref{tab:gnp_l2_grad_error}. 
}
\label{tab:gnp_l2_error}
\end{table}

\newpage
\begin{table}[p]
\centering
\resizebox{0.7\linewidth}{!}{
\begin{tabular}{|l|l|l|l|l|l|}
\hline
\rowcolor{black!20!white}
 \textbf{Kernel} & \textbf{Shape} $\bsy{\sigma}$ & $\bsy{N=1250}$ & $\bsy{2500}$ 
& $\bsy{5000}$ & $\bsy{10000}$ \\ \hline \hline
\rowcolor{black!20!white}
\textbf{Manifold $A$}   & & & & &\\
\hline
\rowcolor{black!5!white}
Gaussian & 10 & 2.42e-01 & 2.44e-01 & 7.18e-01 & \textbf{8.43e+00} \\
\rowcolor{black!5!white}
Gaussian & 5 & \textbf{3.04e+03} & \textbf{2.01e+04} & \textbf{3.03e+04} & \textbf{4.61e+04} \\
\hline
\rowcolor{black!5!white}
Mat\'ern, $\nu=1/2$ & 10 & 1.27e-01 & 1.16e-01 & 1.15e-01 & 1.15e-01 \\
\rowcolor{black!5!white}
Mat\'ern, $\nu=1/2$ & 5 & 1.70e-01 & 1.72e-01 & 1.73e-01 & 1.73e-01 \\
\rowcolor{black!5!white}
Mat\'ern, $\nu=3/2$ & 10 & 7.25e-02 & 7.05e-02 & 7.03e-02 & 7.03e-02 \\
\rowcolor{black!5!white}
Mat\'ern, $\nu=3/2$ & 5 & 1.17e-01 & 1.17e-01 & 1.17e-01 & 1.17e-01 \\
\rowcolor{black!5!white}
Mat\'ern, $\nu=5/2$ & 10 & 8.43e-02 & 8.40e-02 & 8.39e-02 & 8.39e-02 \\
\rowcolor{black!5!white}
Mat\'ern, $\nu=5/2$ & 5 & 9.45e-02 & 9.51e-02 & 9.51e-02 & 9.51e-02 \\
\hline
\rowcolor{black!5!white}
Wendland, $k=0$ & 10/3 & 3.59e-01 & 3.56e-01 & 3.55e-01 & 3.55e-01 \\
\rowcolor{black!5!white}
Wendland, $k=0$ & 5/3 & 4.20e-01 & 4.23e-01 & 4.24e-01 & 4.25e-01 \\
\rowcolor{black!5!white}
Wendland, $k=1$ & 10/3 & 2.59e-01 & 2.57e-01 & 2.57e-01 & 2.57e-01 \\
\rowcolor{black!5!white}
Wendland, $k=1$ & 5/3 & 1.00e-01 & 1.02e-01 & 1.02e-01 & 1.02e-01 \\
\rowcolor{black!5!white}
Wendland, $k=2$ & 10/3 & 6.34e-02 & 6.27e-02 & 6.27e-02 & 6.27e-02 \\
\rowcolor{black!5!white}
Wendland, $k=2$ & 5/3 & 8.50e-02 & 8.62e-02 & 8.62e-02 & 8.62e-02 \\
\hline
\hline
\rowcolor{black!20!white}
\textbf{Manifold $B$}   & & & & &\\
\hline
\rowcolor{black!5!white}
Gaussian & 10 & 9.18e-02 & 9.54e-02 & \textbf{1.02e+00} & \textbf{9.32e+00} \\
\rowcolor{black!5!white}
Gaussian & 5 & \textbf{2.23e+01} & \textbf{2.31e+03} & \textbf{4.22e+03} & \textbf{6.32e+03} \\
\hline
\rowcolor{black!5!white}
Mat\'ern, $\nu=1/2$ & 10 & 6.76e-01 & 6.73e-01 & 6.72e-01 & 6.71e-01 \\
\rowcolor{black!5!white}
Mat\'ern, $\nu=1/2$ & 5 & 1.45e-01 & 1.47e-01 & 1.48e-01 & 1.49e-01 \\
\rowcolor{black!5!white}
Mat\'ern, $\nu=3/2$ & 10 & 1.37e-01 & 1.36e-01 & 1.36e-01 & 1.36e-01 \\
\rowcolor{black!5!white}
Mat\'ern, $\nu=3/2$ & 5 & 1.66e-01 & 1.66e-01 & 1.66e-01 & 1.67e-01 \\
\rowcolor{black!5!white}
Mat\'ern, $\nu=5/2$ & 10 & 1.08e-01 & 1.07e-01 & 1.07e-01 & 1.07e-01 \\
\rowcolor{black!5!white}
Mat\'ern, $\nu=5/2$ & 5 & 1.29e-01 & 1.30e-01 & 1.30e-01 & 1.30e-01 \\
\rowcolor{black!5!white}
\hline
\rowcolor{black!5!white}
Wendland, $k=0$ & 10/3 & 1.09e-01 & 1.01e-01 & 9.91e-02 & 9.88e-02 \\
\rowcolor{black!5!white}
Wendland, $k=0$ & 5/3 & 1.06e-01 & 1.08e-01 & 1.10e-01 & 1.10e-01 \\
\rowcolor{black!5!white}
Wendland, $k=1$ & 10/3 & 3.07e-01 & 3.06e-01 & 3.05e-01 & 3.05e-01 \\
\rowcolor{black!5!white}
Wendland, $k=1$ & 5/3 & 1.08e-01 & 1.09e-01 & 1.09e-01 & 1.09e-01 \\
\rowcolor{black!5!white}
Wendland, $k=2$ & 10/3 & 9.49e-02 & 9.43e-02 & 9.43e-02 & 9.43e-02 \\
\rowcolor{black!5!white}
Wendland, $k=2$ & 5/3 & 1.07e-01 & 1.09e-01 & 1.09e-01 & 1.09e-01 \\
\hline
\hline
\rowcolor{black!20!white}
\textbf{Manifold $C$}  & & & & & \\
\hline
\rowcolor{black!5!white}
Gaussian & 10 & 1.79e-01 & 1.80e-01 & 2.15e-01 & \textbf{3.61e+00} \\
\rowcolor{black!5!white}
Gaussian & 5 & \textbf{3.60e+00} & \textbf{2.00e+02} & \textbf{2.15e+03} & \textbf{3.33e+03} \\
\hline
\rowcolor{black!5!white}
Mat\'ern, $\nu=1/2$ & 10 & 1.66e-01 & 1.56e-01 & 1.53e-01 & 1.53e-01 \\
\rowcolor{black!5!white}
Mat\'ern, $\nu=1/2$ & 5 & 1.61e-01 & 1.61e-01 & 1.61e-01 & 1.62e-01 \\
\rowcolor{black!5!white}
Mat\'ern, $\nu=3/2$ & 10 & 1.36e-01 & 1.35e-01 & 1.35e-01 & 1.35e-01 \\
\rowcolor{black!5!white}
Mat\'ern, $\nu=3/2$ & 5 & 1.58e-01 & 1.59e-01 & 1.59e-01 & 1.59e-01 \\
\rowcolor{black!5!white}
Mat\'ern, $\nu=5/2$ & 10 & 1.29e-01 & 1.29e-01 & 1.29e-01 & 1.29e-01 \\
\rowcolor{black!5!white}
Mat\'ern, $\nu=5/2$ & 5 & 1.82e-01 & 1.84e-01 & 1.85e-01 & 1.85e-01 \\
\hline
\rowcolor{black!5!white}
Wendland, $k=0$ & 10/3 & 1.90e-01 & 1.84e-01 & 1.83e-01 & 1.83e-01 \\
\rowcolor{black!5!white}
Wendland, $k=0$ & 5/3 & 1.61e-01 & 1.63e-01 & 1.64e-01 & 1.64e-01 \\
\rowcolor{black!5!white}
Wendland, $k=1$ & 10/3 & 1.72e-01 & 1.70e-01 & 1.70e-01 & 1.70e-01 \\
\rowcolor{black!5!white}
Wendland, $k=1$ & 5/3 & 1.61e-01 & 1.65e-01 & 1.67e-01 & 1.67e-01 \\
\rowcolor{black!5!white}
Wendland, $k=2$ & 10/3 & 1.33e-01 & 1.32e-01 & 1.32e-01 & 1.32e-01 \\
\rowcolor{black!5!white}
Wendland, $k=2$ & 5/3 & 1.50e-01 & 1.59e-01 & 1.63e-01 & 1.64e-01 \\
\hline
\end{tabular}
}
\caption{\textbf{$\bsy{L^2}$-norm Relative Errors for 
$\bsy{\nabla \tilde u}$.}
We show the accuracy of approximating 
$\nabla \tilde u$ when solving equation~\ref{eq:representer}.  We show in bold
errors larger than $1.0$.  The accuracy of these results indicates that the 
gradients are the dominant source of approximation error in the 
Sobolev-norm in Table~\ref{tab:gnp_h1_error}.}
\label{tab:gnp_l2_grad_error}
\end{table}

\end{document}